\newtheorem{lemma}{Lemma}
\newtheorem{thm}{Theorem}[section]
\DeclareMathOperator*{\argmax}{arg\,max}
\def\R {\mathbb{R}}
\def\E {\mathbb{E}}
\def\Z {\mathbb{Z}}
\def\cN {\mathcal{N}}
\def\SglRm{S_{h,l}^m}
\def\SglponeRm{S_{h,l+1}^m}
\def\SglRtildemi{S_{h,l}^{\tilde{m}_{i0}}}
\def\SgoneRktildem{S_{h,1}^{k\tilde{m}}}
\def\SglRnm{S_{h,l}^{n,m}}
\def\SgljRnjmj{S_{h,l_j}^{n_j,m_{j}}}
\def\SglMn{S_{h,l}^{\mathcal{P}}}
\def\SgkMnKone{S_{h,1}^{\mathcal{P}}}
\def\tightset{\Omega}
\def\radonj{\mathcal{P}(\R^{n_j})}
\def\radonone{\mathcal{P}(\R^{n_1})}
\def\radonk{\mathcal{P}(\R^{n_k})}
\def\fabsnn{\tilde{H}}
\def\radonKj{\mathcal{P}(K_j)}
\def\radonK{\mathcal{P}(K)}
\icmltitlerunning{Measure-conditional Discriminator with Stationary Optimum for GANs and Statistical Distance Surrogates}
\begin{document}

\twocolumn[

\icmltitle{Measure-conditional Discriminator with Stationary Optimum \\for GANs and Statistical Distance Surrogates}

\begin{icmlauthorlist}
\icmlauthor{Liu Yang}{br}
\icmlauthor{Tingwei Meng}{br}
\icmlauthor{George Em Karniadakis}{br}
\end{icmlauthorlist}

\icmlaffiliation{br}{Division of Applied Mathematics, Brown University, Providence, Rhode Island, USA}

\icmlcorrespondingauthor{George Em Karniadakis}{george\_karniadakis@brown.edu}

\icmlkeywords{Machine Learning, ICML}

\vskip 0.3in 
]

\printAffiliationsAndNotice{}

\begin{abstract}
    We propose a simple but effective modification of the discriminators, namely measure-conditional discriminators, as a plug-and-play module for different GANs. By taking the generated distributions as part of input so that the target optimum for the discriminator is stationary, the proposed discriminator is more robust than the vanilla one. A variant of the measure-conditional discriminator can also handle multiple target distributions, or act as a surrogate model of statistical distances such as KL divergence with applications to transfer learning.

\end{abstract}

\section{Introduction}

Generative adversarial networks (GANs)~\cite{goodfellow2014generative} have proven to be successful in training generative models to fit the target distributions. Apart from tasks of image generation~\cite{brock2018large, zhu2017unpaired}, text generation~\cite{pmlr-v70-zhang17b,fedus2018maskgan}, etc., GANs have also been used in physical problems to infer unknown parameters in stochastic systems~\cite{yang2020physics, yang2019adversarial, yang2020generative}. Due to the variety of the generative models, GAN loss functions, and the need for high accuracy inferences, such tasks usually set a strict requirement to the robustness of GANs, as well as the similarity between the generated and target distributions in various metrics. 

The optimum for the discriminator is, in general, non-stationary, i.e., it varies during the training, since it depends on the generated distributions. Such issue could lead to instability or oscillation in the training. Here, we propose a simple but effective modification to the discriminator as a plug-and-play module for different GANs, including vanilla GANs~\cite{goodfellow2014generative}, Wasserstein GANs with gradient penalty (WGAN-GP)~\cite{gulrajani2017improved}, etc. The main idea is to make the discriminator conditioned on the generated distributions, so that its optimum is stationary during the training.

The neural network architecture of the
measure-conditional discriminator is adapted from DeepSets neural network~\cite{zaheer2017deep}, which is widely used in point cloud related tasks. It is also used in GANs~\cite{li2018point} where each sample corresponds to a point cloud, while we target on more general tasks where each sample corresponds to a particle, an image, etc.
In \citet{lucas2018mixed}, the discriminator takes the mixture of real and generated distributions (instead of individual samples) as input, but it also has a non-stationary target optimum, and performs worse than our discriminators in experiments. We also emphasize the difference between the measure-conditional discriminator and conditional GANs~\cite{mirza2014conditional}. Conditioned on a vector featuring the target distributions, the conditional GANs still have a non-stationary target optimum for the discriminator and are limited to the scenarios where the samples can be categorized. Moreover, the measure-conditional discriminator can also be applied in conditional GANs, by making the original discriminator further conditioned on the generated distributions.

In Section~\ref{sec:Stationary} we discuss why we need stationary target optimum for the discriminator. In Section~\ref{sec:MCDis} we give a detailed description of the proposed discriminator neural networks and how to apply them in GANs. In Section~\ref{sec:StatDist} we extend the application of measure-conditional discriminators as surrogate models of statistical distances. In Section~\ref{sec:UA} we present a universal approximation theorem of the neural networks used in this paper. The experimental results are shown in Section~\ref{sec:results}. We conclude in Section~\ref{sec:Summary}.

\begin{figure*}[ht]
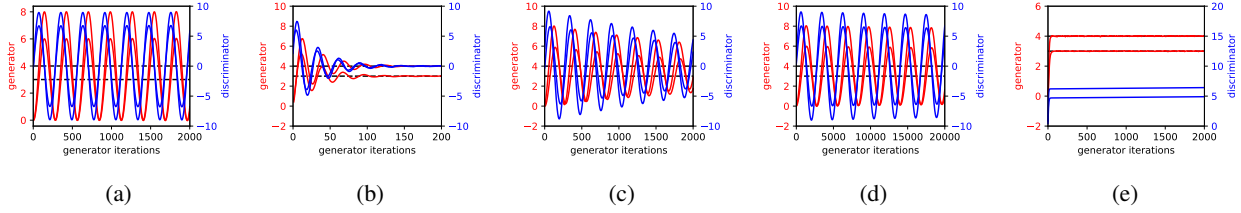

     \centering
     \begin{subfigure}[b]{0.19\textwidth}
         \centering
         \includegraphics[width=\textwidth]{Plots/ToyVanilla.pdf}
         \caption{}
         \label{fig:ToyVanilla}
     \end{subfigure}
     \begin{subfigure}[b]{0.19\textwidth}
         \centering
         \includegraphics[width=\textwidth]{Plots/OMD01.pdf}
         \caption{}
         \label{fig:ToyOMD01}
     \end{subfigure}
     \begin{subfigure}[b]{0.19\textwidth}
         \centering
         \includegraphics[width=\textwidth]{Plots/OMD001.pdf}
         \caption{}
         \label{fig:ToyOMD001}
     \end{subfigure}
     \begin{subfigure}[b]{0.19\textwidth}
         \centering
         \includegraphics[width=\textwidth]{Plots/OMD0001.pdf}
         \caption{}
         \label{fig:ToyOMD0001}
     \end{subfigure}
    \begin{subfigure}[b]{0.19\textwidth}
         \centering
         \includegraphics[width=\textwidth]{Plots/ToyDistribution.pdf}
         \caption{}
         \label{fig:ToyDistribution}
     \end{subfigure}
        \caption{Results for the illustrative example of oscillation. (a): vanilla discriminator using gradient descent with learning rate 0.01. (b-d): vanilla discriminator using optimistic mirror descent, with learning rate 0.1, 0.01, and 0.001, respectively. Note that the multiplications of learning rate and training iterations are kept the same. (e): measure-conditional discriminator using gradient descent with learning rate 0.01. The red and blue lines show the generator and discriminator parameters in both two dimensions, while the black horizontal lines represent the ground truth for the generator parameters.}
        \label{fig:ToyGAN}
\end{figure*}

\section{Stationary Target Optimum for the Discriminator}\label{sec:Stationary}

In general, there are two mathematical perspectives for GANs. The first perspective is to view GANs as a two-player zero-sum game between the generator $G$ and the discriminator $D$. The hope is that the iterative adversarial training of $G$ and $D$ will lead to the Nash equilibrium of this zero-sum game, where the generated distribution will be identical to the target distribution of real data. The second perspective is that the discriminator gives the distance between the generated distribution and the target distribution in a variational form. For example, vanilla GANs can be formulated as:
\begin{equation}\label{eqn:vanillaGAN}
\begin{aligned}
    \min_G & \max_D V(G_{\#}\cN,D),\\
    V(G_{\#}\cN,D) &= \mathbb{E}_{x\sim G_{\#}\cN}\log(1-D(x)) + \mathbb{E}_{x\sim Q}\log(D(x))\\
    & = \mathbb{E}_{z\sim\cN}\log(1-D(G(z))) + \mathbb{E}_{x\sim Q}\log(D(x)),
\end{aligned}
\end{equation}
while Wasserstein GANs (WGANs) can be formulated as:
\begin{equation}\label{eqn:WGAN}
\begin{aligned}
    \min_G & \max_{D \text{ is 1-Lipschitz}} V(G_{\#}\cN,D),\\
     V(G_{\#}\cN,D) & = -\mathbb{E}_{x\sim G_{\#}\cN}D(x) + \mathbb{E}_{x\sim Q}D(x)\\
    &= -\mathbb{E}_{z\sim\cN}D(G(z)) + \mathbb{E}_{x\sim Q}D(x),
\end{aligned}
\end{equation}
where $Q$ represents the target distribution, $\cN$ represents input noise, $_{\#}$ is the push-forward operator, thus $G_{\#}\cN$ represents the generated distribution. In vanilla GANs and WGANs, $\max_D V(G_{\#}\cN,D)$ and $\max_{D\text{ is 1-Lipschitz}} V(G_{\#}\cN,D)$ are nothing but the Jensen-Shannon (JS) divergence and the Wasserstein-1 distance up to constants between $G_{\#}\cN$ and $Q$, respectively.

\subsection{Non-stationary Target Optimum Hurts: An Illustrative Example}

From both two perspectives of GANs, the discriminator will approach its optimum $D^*$ in each iteration.
However, we will use the following illustrative example to demonstrate that $D^*$ could be totally different as we perturb the generator, and such issue would lead to the oscillation of both generator and discriminator during training.
This illustrative problem is adapted from \citet{daskalakis2018training} with different analysis. We first consider a linear discriminator as well as a translation function as the generator, i.e.,
\begin{equation}\label{eqn:ill_WGAN_v}
\begin{aligned}
    D_w(x) &= \langle w, x \rangle = \sum_{i=1}^d w_ix_i ,\\
    G_\theta(z) &= z + \theta, z \sim \cN(0, I)=:\cN
\end{aligned}
\end{equation}
with the target real distribution $Q = \cN(v,I)$. The goal is to learn $\theta$ with ground truth $\theta^* = v$. 

The WGAN with weight-clipping is formulated as 
\begin{equation}\label{eqn:ill_minmax}
\begin{aligned}
    \min_\theta & \max_{|w_i| \le c} -\mathbb{E}_{z\sim\cN}D(G(z))) + \mathbb{E}_{x\sim Q}D(x),
\end{aligned}
\end{equation}
where $c>0$ is the weight-clipping bound. In practice, we will use the empirical distributions to calculate the expectations, but if we calculate it analytically, we have the following min-max formulation:
\begin{equation}\label{eqn:ill_analytic}
\begin{aligned}
    \min_\theta & \max_{|w_i| \le c} \langle w, v- \theta\rangle.
\end{aligned}
\end{equation}
This two-player game has a unique equilibrium at $\theta = v, w=0$, which appears to be satisfactory. However, if we set $\theta = v +\epsilon$, where $\epsilon \neq0$ is the inevitable small fluctuation vector due to the randomness of the training data, moments in the optimizer, etc., then $w$ would achieve the corresponding optimum at $-\text{sgn}(\epsilon)c$, where ``sgn'' denotes the component-wise sign function. In other words, the optimal $w$ would jump between $c$ and $-c$ for each entry as $\theta$ fluctuates around the ground truth. Such issue of jumping optimum will lead to the oscillation of both the generator and discriminator during training, as is illustrated in Figure \ref{fig:ToyVanilla}, where we test on a 2D problem with ground truth $v = (3,4)$ and $c = 10$. 
Note that even if we set the discriminator as a general 1-Lipschitz function as in Equation~\ref{eqn:WGAN}, the corresponding optimum will be $D^*(x)=-\epsilon \cdot x/|\epsilon| + d$, which is still sensitive to the small fluctuation $\epsilon$, where $d$ is an arbitrary constant.

To remove the oscillation, \citet{daskalakis2018training} proposed to replace the gradient descent (GD) for the min-max formulation~(\ref{eqn:ill_analytic})
\begin{equation}
\begin{aligned}
    w_{t+1}-w_t &=  \eta(v-\theta_t)=: \eta \dot{w}_t,\\
    \theta_{t+1} - \theta_t &= \eta w_t =:\eta \dot{\theta}_t,
\end{aligned}
\end{equation}
with the optimistic mirror descent (OMD)
\begin{equation}
\begin{aligned}
    w_{t+1}-w_t & =2\eta(v-\theta_t) - \eta(v-\theta_{t-1}) = \eta \dot{w}_t - \eta^2 \dot{\theta}_{t-1},\\
    \theta_{t+1} - \theta_t &=2\eta w_t-\eta w_{t-1} = \eta \dot{\theta}_t + \eta^2 \dot{w}_{t-1},
\end{aligned}
\end{equation}
where $\eta$ is the learning rate. However, we report that the oscillation decay with OMD could be very slow when a small learning rate is used, as is illustrated in Figures~\ref{fig:ToyOMD01},\ref{fig:ToyOMD001},\ref{fig:ToyOMD0001}.
This is because the differences between the OMD and GD update, $-\eta^2 \dot{\theta}_{t-1}$ and $\eta^2 \dot{w}_{t-1}$, are second order w.r.t. $\eta$, i.e., one order higher than the GD update, $ \eta \dot{w}_t$ and $\eta \dot{\theta}_t$. The difference between the GD and OMD dynamics thus vanishes as $\eta$ goes to zero.

In the following, we will propose a much simpler and more effective strategy to remove these oscillations.

\subsection{Benefits of Stationary Target Optimum}

Since the aforementioned problem is due to the fact that the target optimum for the discriminator is non-stationary, to remove the oscillation, we propose  to modify the discriminator architecture so that its target optimum is stationary during the training.

While keeping the generator and min-max formulation unchanged as in Equations~\ref{eqn:ill_WGAN_v} and \ref{eqn:ill_minmax}, we set the discriminator
\begin{equation}\label{eqn:ill_weightdis}
\begin{aligned}
    D_w(x) =\sum_{i=1}^d w_ix_i (\mathbb{E}_{x\sim Q}(x) - \mathbb{E}_{z\sim \cN}(G_\theta(z)))_i 
\end{aligned}
\end{equation}
where $(\cdot)_i$ denotes the $i$-th component. The only differences between Equations~\ref{eqn:ill_WGAN_v} and~\ref{eqn:ill_weightdis} are the weights for $w_ix_i$.
If we calculate the expectations in the min-max formulation \ref{eqn:ill_minmax} analytically, we will have
\begin{equation}
\begin{aligned}
    \min_\theta & \max_{|w_i| \le c} \sum_{i=1}^d w_i(v_i - \theta_i)^2.
\end{aligned}
\end{equation}

For this min-max problem, any $w$ with non-negative entries and $\theta = v$ is a Nash equilibrium. If we set $\theta = v +\epsilon$ with $\epsilon \neq 0$, then $w$ would achieve the corresponding optimum at $c$ for each entry, i.e., the target optimum for the discriminator is stationary. As shown in Figure~\ref{fig:ToyDistribution}, the oscillation is totally removed. Each entry of $w$ is heading to the optimum $c$ in the early stage of training, while the change becomes negligible after $\theta$ converges to $v$, indicating that the Nash equilibrium is achieved.

The magic of the above solution lies in the fact that by designing the discriminator properly, we have a stationary target optimum for the discriminator during the training. Is this possible for more general GAN tasks where generators and discriminators are neural networks, and the target distributions are more flexible?

Note that the discriminator in Equation~\ref{eqn:ill_weightdis} can be interpreted as a discriminator conditioned on the generated and target distribution, so for more general GAN tasks we can simply design the discriminator as
\begin{equation}\label{eqn:MeasCondD}
    D_{mc} = D_{mc}(x,G_{\#}\cN),
\end{equation}
where $G_{\#}\cN$ is the generated distribution. The target distribution $Q$ is omitted in the input since it is usually fixed in a GAN task, but we will revisit this in Section~\ref{sec:StatDist}. We name the discriminator in Equation~\ref{eqn:MeasCondD} as a ``\textit{measure-conditional discriminator}'' since it is conditioned on the probability measure corresponding to the generated distribution.
The proposed measure-conditional discriminator can be a plug-and-play module in a variety of GANs. We only need to replace the original discriminator $D(\cdot)$ with $D_{mc}(\cdot, G_{\#}\cN)$, while the generator and the min-max formulation of GANs will be kept unchanged. A more detailed introduction of the measure-conditional discriminator in GANs will be presented in Section~\ref{sec:MCDis}.

We can see that by taking $G_{\#}\cN$ as part of the input, the measure-conditional discriminator will have a stationary target optimum during the training process. Indeed, for a general GAN problem originally formulated as
\begin{equation}\label{eqn:gan_minmax}
   \min_{G\in \mathcal{G}} \max_{D\in \mathcal{D}} V(G_{\#}\cN,D),
\end{equation}
with two examples given in Equation~\ref{eqn:vanillaGAN} and Equation~\ref{eqn:WGAN}, the target optimum for the measure-conditional discriminator is
\begin{equation}\label{eqn:targetoptimum}
   D^*_{mc}(x,G_{\#}\cN) = \left(\argmax_{D\in \mathcal{D}}  V(G_{\#}\cN,D)\right)(x).
\end{equation}

Although $G_{\#}\cN$ varies during the training, $ D^*_{mc}$ is a function of $G_{\#}\cN$ and $x$ is stationary.

From the perspective of statistical distances, the target optimum $D^*_{mc}$ is exactly a surrogate model for the distance between $G_{\#}\cN$ and $Q$. For example, in vanilla GANs, 
\begin{equation}\label{eqn:gansurrogate}
   \mathbb{E}_{z\sim\cN}\log(1-D^*_{mc}(G(z), G_{\#}\cN)) + \mathbb{E}_{x\sim Q}\log(D^*_{mc}(x,G_{\#}\cN))
\end{equation}
represents the JS divergence between $G_{\#}\cN$ and $Q$ up to constants, while in WGANs,
\begin{equation}\label{eqn:wgansurrogate}
   -\mathbb{E}_{z\sim\cN}D^*_{mc}(G(z), G_{\#}\cN) + \mathbb{E}_{x\sim Q}D^*_{mc}(x, G_{\#}\cN)
\end{equation}
represents the Wasserstein-1 distance between $G_{\#}\cN$ and $Q$ up to constants. 

It is hard to attain the target optimum $D^*_{mc}$, considering that it is a function of measures. However, we note that $D_{mc}$ does not need to attain $D^*_{mc}$ for the convergence of GANs. Indeed, we only require $D_{mc}$ to approximate the optimum for $G_{\#}\cN$, instead of the whole space of probability measures.

The vanilla discriminator only utilizes the result of the previous \textit{one} iteration to provide the initialization. If the optimum is sensitive to the generated distribution as in the above illustrative example, in each iteration, the vanilla discriminator need to ``forget the wrong optimum'' inherited from the previous iteration and head for the new one in a few discriminator updates. In contrast, the measure-conditional discriminator progressively head for the stationary target optimum in \textit{all} the iterations. In fact, even outdated generated distributions can be used to train the measure-conditional discriminator. If the optimum is sensitive to the generated distribution, the measure-conditional discriminator does not need to forget the inheritances from previous iterations, but
only need to learn the sensitivity w.r.t. the input measure.

To some extent, the generator and the measure-conditional discriminator are trained in a \textit{collaborative} way, in that the generator adaptively produces new distributions as training data to help the discriminator approximate $D^*_{mc}$, while the discriminator provides statistical distances to help the generator approach the target distribution. This concept is actually similar to reinforcement learning in the actor-critic framework~\cite{grondman2012survey}, with parallelism between the generator and actor, as well as between the discriminator and critic.

\section{Measure-conditional Discriminator in GANs}\label{sec:MCDis}

Proposed in \citet{zaheer2017deep}, the DeepSets neural network having the form of $H(X) = g(\sum_{x_i\in X}f(x_i))$
is widely used to represent a function of a point cloud $X$. The summation can be replaced by averaging to represent a function of probability measure $P$, i.e., $H(P) = g(\frac{1}{n}\sum_{i=1}^nf(x_i))$
where $\{x_i\}_{i=1}^n$ are samples from $P$.

In order to take a probability measure and an individual sample simultaneously as the discriminator input, we adapt the neural network architecture above to get
\begin{equation}\label{eqn:DxP}
    \begin{aligned}
        D_{mc}(x, P) = h(\mathbb{E}_{y\sim P}[f(y)], g(x))
        \approx h(\frac{1}{n}\sum_{i=1}^{n}f(y_{i}), g(x))
    \end{aligned}
\end{equation}
where $f$, $g$ and $h$ are neural networks, and $\{y_{i}\}_{i=1}^{n}$ are samples from $P$.

The measure-conditional discriminator is a plug-and-play module in a various GANs. The only modification is to replace $D(\cdot)$ with $D_{mc}(\cdot, G_{\#}\cN)$. For example, in vanilla GANs, the loss functions for the generator and the discriminator are
\begin{equation}
    \begin{aligned}
        L_g =& \mathbb{E}_{z\sim\cN}\log(1-D_{mc}(G(z),G_{\#}\cN)) \\&+ \mathbb{E}_{x\sim Q}\log(D_{mc}(x,G_{\#}\cN)), \\
        L_d =& - \mathbb{E}_{z\sim\cN}\log(1-D_{mc}(G(z),G_{\#}\cN)) \\&- \mathbb{E}_{x\sim Q}\log(D_{mc}(x,G_{\#}\cN)),
    \end{aligned}
\end{equation}
respectively. In WGAN with gradient penalty (WGAN-GP), the loss functions are 
\begin{equation}
    \begin{aligned}
        L_g =& -\mathbb{E}_{z\sim\cN}D_{mc}(G(z), G_{\#}\cN) + \mathbb{E}_{x\sim Q}D_{mc}(x,G_{\#}\cN),\\
        L_d =& \mathbb{E}_{z\sim\cN}D_{mc}(G(z), G_{\#}\cN) - \mathbb{E}_{x\sim Q}D_{mc}(x,G_{\#}\cN) \\&+ \lambda \mathbb{E}_{\hat{x}\sim \rho_{\hat{x}} } (\Vert\nabla_{\hat{x}} D_{mc}(\hat{x}, G_{\#}\cN) \Vert_2 - 1)^2,
    \end{aligned}
\end{equation}
respectively, where $\lambda$ is the weight for gradient penalty, and $\rho_{\hat{x}}$ is the distribution generated by sampling uniformly on interpolation lines between pairs of points sampled from real distributions and generated distributions. Note that the expectation over the real distribution $Q$ cannot be removed from the generator loss, since this term dependents on the generator now.

\section{Measure-conditional Discriminator for Statistical Distances Surrogate}\label{sec:StatDist}

The target distribution is usually fixed in GANs, thus omitted in the input of $D_{mc}$. Taking one step further, we will build a measure-conditional discriminator $D_{sr}$ conditioned on two probability measures $P$ and $Q$, which can act as a surrogate model to approximate the statistical distances between $P$ and $Q$.
Specifically, the neural network $D_{sr}$ is formulated as 
\begin{equation}\label{eqn:DxPQ}
    \begin{aligned}
        D_{sr}(x, P, Q) &= h(\mathbb{E}_{y\sim P}[f_1(y)], \mathbb{E}_{y\sim Q}[f_2(y)], g(x))\\
        &\approx h(\frac{1}{n_1}\sum_{i=1}^{n_1}f_1(y_{i}^{P}), \frac{1}{n_2}\sum_{i=1}^{n_2}f_2(y_{i}^{Q}), g(x))
    \end{aligned}
\end{equation}
where $f_1$, $f_2$, $g$ and $h$ are neural networks, and $\{y_{i}^{P}\}_{i=1}^{n_1}$ and $\{y_{i}^{Q}\}_{i=1}^{n_2}$ are samples from $P$ and $Q$, respectively. 

\subsection{Unsupervised Training with Variational Formula}
We will train $D_{sr}$ using the variational form of the statistical distances, in the same spirit as in GANs. Here, we take the KL divergence as an example, which has the following variational formula \cite{nguyen2010estimating}:
\begin{equation}\label{eqn:KLvariational}
\begin{aligned}
    D_{KL}(P||Q) &= \sup_{g>0} \left( \mathbb{E}_{x\sim P} [\log(g(x))] - \mathbb{E}_{x \sim Q} [g(x)]  + 1\right),\\
    &= \sup \left( \mathbb{E}_{x\sim P} [g(x)] - \mathbb{E}_{x \sim Q} [\exp(g(x))]  + 1\right).
\end{aligned}
\end{equation}
Thus, the loss function for $D_{sr}$ can be written as
\begin{equation}\label{eqn:KLtrain}
\begin{aligned}
    L_{KL} &= \mathbb{E}_{(P, Q)\sim \mu}[-l_{KL}(P,Q)] \\
    l_{KL}(P,Q) & = \mathbb{E}_{x\sim P} [D_{sr}(x, P, Q)] \\ &- \mathbb{E}_{x \sim Q} [\exp(D_{sr}(x, P, Q))] + 1, \\
\end{aligned}
\end{equation}
where $\mu$ represents the distribution for the probability measure pairs $(P,Q)$ in the training. Ideally, $l_{KL}(P,Q)$ will approximate $D_{KL}(P,Q)$ if $D_{sr}$ achieves optimum. Similar loss functions can be constructed for many other statistical distances like JS divergence, total variation etc., provided with variational forms as in Equation~\ref{eqn:KLvariational}. We also give an example of a surrogate model with results for the optimal transport map in Supplementary Material.

Note that after the optimization of $D_{sr}$ (which can be offline), via a forward propagation of $D_{sr}$, we can estimate $l_{KL}(P,Q)$ as an approximation of $D_{KL}(P, Q)$ for various $(P,Q)$ pairs sampled from $\mu$, and even for $(P,Q)$ pairs that are never seen in the training procedure (thanks to the generalization of neural networks). The computational cost for the forward propagation grows linearly w.r.t. the sample size.
More importantly, no labels are required to train $D_{sr}$. Instead, we only need to prepare samples of $P$ and $Q$ as training data. 

Here, $\mu$ can, of course, be prescribed by the users. It can also be decided actively during the training, depending on specific tasks. In the context of GANs, with $Q$ being different target distributions and $P$ being the corresponding generated distributions, $(P,Q)$ samples can be induced from a family of GAN tasks. $(P,Q)$ samples can also be induced from a single GAN task, if we need to fit multiple target distributions simultaneously, e.g., the distributions at multiple time instants in time-dependent problems. We will demonstrate this with an example in Section~\ref{sec:results}.

\subsection{Transfer Learning with Statistical Distance Surrogate}

As a surrogate model of statistical distances between distributions, it is possible that a well-trained $D_{sr}$ can be transferred to GANs and act as a discriminator without any update.
For example, if $D_{sr}$ is pretrained with Equation~\ref{eqn:KLtrain}, the generator can be trained with the loss function $L_g= l_{KL}(G_{\#}\cN, Q)$, with $l_{KL}$ from Equation~\ref{eqn:KLtrain}.
However, training $G$ with a frozen $D_{sr}$ requires that $(G_{\#}\cN, Q)$ is not an outlier of $\mu$. This typically means that the degree of freedom for $G$ is limited.

Alternatively, the pretrained $D_{sr}$ can be employed as an initialization of the discriminator and be fine-tuned in GANs. With $Q$ fixed as the target distribution, $D_{sr}$ is reduced to a function of $P$ and $x$, just as $D_{mc}$. We can then train it iteratively with the generator as in Section~\ref{sec:MCDis} . Note that the loss function in GANs should coincide with that in the pretraining of $D_{sr}$. For example, if $D_{sr}$ is pretrained with Equation~\ref{eqn:KLtrain}, then the loss functions for the generator $G$ is given by $L_g= l_{KL}(G_{\#}\cN, Q)$, while the loss functions for $D_{sr}$ is $-L_g$.

\begin{figure*}[ht]
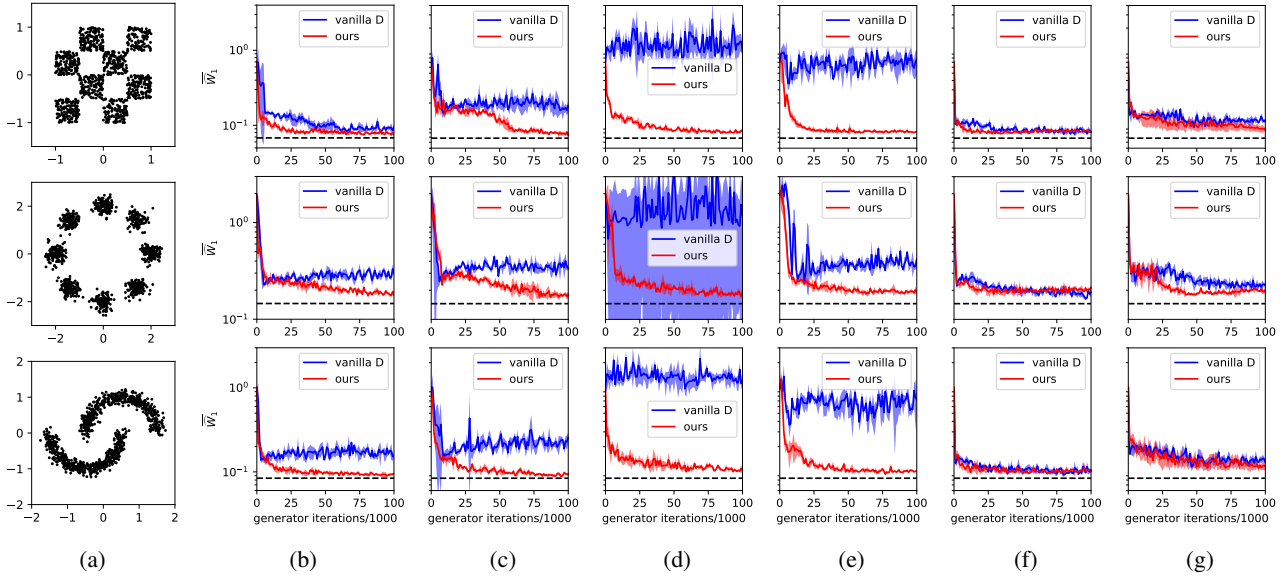

     \centering
         \centering
        \begin{subfigure}[b]{0.147\textwidth}
         \includegraphics[width=\textwidth]{Plots/All-target.pdf}
        \caption{}
         \end{subfigure}
         \begin{subfigure}[b]{0.1695\textwidth}
         \includegraphics[width=\textwidth]{Plots/All-Vanilla-1-0509.pdf}
        \caption{}
         \end{subfigure}
        \begin{subfigure}[b]{0.13\textwidth}
         \includegraphics[width=\textwidth]{Plots/All-Vanilla-1-default.pdf}
        \caption{}
         \end{subfigure}
        \begin{subfigure}[b]{0.13\textwidth}
         \includegraphics[width=\textwidth]{Plots/All-WGAN-GP-1-0509.pdf}
        \caption{}
         \end{subfigure}
         \begin{subfigure}[b]{0.13\textwidth}
         \includegraphics[width=\textwidth]{Plots/All-WGAN-GP-1-default.pdf}
        \caption{}
         \end{subfigure}
        \begin{subfigure}[b]{0.13\textwidth}
         \includegraphics[width=\textwidth]{Plots/All-WGAN-GP-5-0509.pdf}
        \caption{}
         \end{subfigure}
        \begin{subfigure}[b]{0.13\textwidth}
         \includegraphics[width=\textwidth]{Plots/All-WGAN-GP-5-default.pdf}
        \caption{}
         \end{subfigure}

        \caption{Comparison between the vanilla discriminator and measure-conditional discriminator $D_{mc}$ (ours) in three 2D problems. Each row represents the results for one problem. (a): The three target distributions. (b-g): $\overline{\widehat{W}}_1(P,Q)$ against generator iterations, using different versions of GANs, discriminator/generator iteration ratios, and $(\beta_1, \beta_2)$ in Adam optimizer. (b): Vanilla GAN, 1:1, (0.5, 0.9), (c): Vanilla GAN, 1:1, (0.9, 0.999), (d): WGAN-GP, 1:1, (0.5, 0.9), (e): WGAN-GP, 1:1, (0.9, 0.999), (f): WGAN-GP, 5:1, (0.5, 0.9), (g): WGAN-GP, 5:1, (0.9, 0.999).
        The y-axes are shared for each row, and the black dashed lines represent $\overline{\widehat{W}}_1(Q,Q)$.}
        \label{fig:2D}
\end{figure*}

\section{Universal Approximation Property}\label{sec:UA}
The measure-conditional discriminators introduced above have the general form
\begin{equation}\label{eqn:abs_NN}
    \fabsnn(P_1,\dots,P_k) = g(\E_{P_1}[f_1],\dots, \E_{P_k}[f_k]),
\end{equation}
where $\E_{P_i}[f_i]$ denotes $\E_{x\sim P_i}[f_i(x)]$, each $f_j\colon \R^{n_j}\to \R^{m_j}$ for $j=1,\dots,k$ and $g\colon \R^m\to \R$ with $m:=\sum_{j=1}^km_j$ are neural networks. Note that $P_i$ can be a Dirac measure $\delta_x$, in which case $\E_{P_i}[f_i]$ is reduced to $f_i(x)$.
While \citet{pevny2019approximation} have presented a version of universal approximation theorem for nested neural networks on spaces of probability measures, the neural network architecture in Equation~\ref{eqn:abs_NN} actually takes a simpler form. We present the universal approximation theorem for the neural network in the form of Equation~\ref{eqn:abs_NN} in this Section while leaving the proof in Supplementary Material.

We use $\radonK$ to denote the space of probability distribution on a set $K$.
Let $\SglRnm$ be a space of neural networks from $\R^n$ to $\R^m$ with $l$ hidden layers and the activation $h\colon \R\to\R$, with arbitrary number of neurons in each layer. 
Let $C(\prod_{j=1}^k\radonKj;\R)$ denote the space of real-valued continuous functions on $\prod_{j=1}^k\radonKj$, which is equipped with the product of the weak topology.

\begin{thm} \label{thm:UA1}
Let $h\colon \R\to\R$ be an analytic and Lipschitz continuous non-polynomial activation function, and $K_j$ be a compact set in $\R^{n_j}$ for $j=1,\dots,k$. Let $\mathcal{H}$ be the space of functions in the form of~\ref{eqn:abs_NN} with $f_j\in \SgljRnjmj$ and $g\in \SglRm$, where $l,l_1,\dots,l_k,m,m_1,\dots, m_k\in\Z^+$.
Then, $\mathcal{H}$ is dense in $C(\prod_{j=1}^k\radonKj;\R)$ with respect to the uniform norm topology.
\end{thm}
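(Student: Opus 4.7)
The plan is to reduce the statement to the classical universal approximation theorem for feedforward nets by first proving a weaker version with continuous (not necessarily neural) $f_j, g$ via Stone--Weierstrass, and then approximating in a second step. Since each $K_j$ is compact, Prokhorov's theorem implies $\mathcal{P}(K_j)$ is compact in the weak topology, hence the product $\prod_{j=1}^k \mathcal{P}(K_j)$ is a compact Hausdorff space under the product topology. This is the setting needed to apply Stone--Weierstrass.

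First I would introduce the auxiliary class
\begin{equation*}
\mathcal{A} = \bigg\{\sum_{l=1}^N \prod_{j=1}^k \mathbb{E}_{P_j}[\phi_{j,l}] : N\in\Z^+,\ \phi_{j,l}\in C(K_j;\R)\bigg\},
\end{equation*}
and show $\mathcal{A}$ is a subalgebra of $C(\prod_j \mathcal{P}(K_j);\R)$ containing the constants. Continuity in the weak topology of each monomial is immediate from the definition of weak convergence. The algebra property is clear from expanding products. Separation of points follows because if $(P_1,\dots,P_k) \neq (P_1',\dots,P_k')$, then some coordinate $P_i \neq P_i'$, so there is a $\phi\in C(K_i;\R)$ with $\mathbb{E}_{P_i}[\phi]\neq\mathbb{E}_{P_i'}[\phi]$; embedding this as a rank-one element of $\mathcal{A}$ separates the two tuples. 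By Stone--Weierstrass, $\mathcal{A}$ is dense in $C(\prod_j\mathcal{P}(K_j);\R)$ under the uniform norm. Any element of $\mathcal{A}$ is, by grouping the $\phi_{j,l}$ into vector-valued continuous maps $\vec{\phi}_j\colon K_j\to\R^{m_j}$ and taking $F\colon \R^m\to\R$ to be the corresponding multilinear polynomial, exactly of the form $F(\E_{P_1}[\vec{\phi}_1],\dots,\E_{P_k}[\vec{\phi}_k])$.

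Next I would replace $\vec{\phi}_j$ and $F$ by neural networks. By the classical universal approximation theorem (e.g.\ Pinkus' result, which applies to any non-polynomial continuous activation and in particular the analytic Lipschitz $h$ of the hypothesis), for any $\epsilon>0$ there exist $f_j\in S_{h,l_j}^{n_j,m_j}$ with $\|f_j-\vec{\phi}_j\|_{\infty,K_j}<\epsilon$. Because $\vec{\phi}_j$ is continuous on the compact set $K_j$, its image is bounded, say contained in a ball $B_j\subset\R^{m_j}$; then $\E_{P_j}[\vec{\phi}_j]\in\overline{\operatorname{co}}(B_j)$ uniformly in $P_j$, and similarly $\E_{P_j}[f_j]$ lies in a slightly enlarged compact set $\tilde B_j$. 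On the compact product $\prod_j \tilde B_j\subset\R^m$ we approximate $F$ by $g\in S_{h,l}^m$ with $\|g-F\|_\infty<\epsilon$.

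The remaining step is stability of the composition, which I expect to be essentially routine but is the only place where the assumptions on $h$ enter beyond universal approximation. Using $\|\E_{P_j}[f_j]-\E_{P_j}[\vec\phi_j]\|\le\epsilon$ uniformly in $P_j$, together with uniform continuity of $F$ on the compact set $\prod_j\tilde B_j$ with modulus $\omega_F$, one bounds
\begin{equation*}
\bigl|\,g(\E[f_1],\dots,\E[f_k])-F(\E[\vec\phi_1],\dots,\E[\vec\phi_k])\,\bigr| \le \epsilon+\omega_F(\epsilon),
\end{equation*}
uniformly in $(P_1,\dots,P_k)$. Combining with the Stone--Weierstrass step via the triangle inequality gives the desired density of $\mathcal{H}$. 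The main obstacle I anticipate is not any single step but keeping the three approximation levels (Stone--Weierstrass, inner nets $f_j$, outer net $g$) compatible: one must pick the inner $f_j$ first so that the relevant compact set for $g$ is fixed before the outer approximation, and one must verify that the weak-topology continuity arguments really do commute with taking finite-sample versions of the expectations as written in~\eqref{eqn:DxP} and~\eqref{eqn:DxPQ}.
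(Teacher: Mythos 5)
Your overall strategy --- classical Stone--Weierstrass applied to cylinder-type functionals of the measures, followed by a two-level application of the standard universal approximation theorem and a stability-of-composition estimate --- is sound and does yield the theorem, but the one step you call ``clear'' is false as written. The set $\mathcal{A}$ you define, whose elements have exactly one expectation factor per coordinate $j$ in each summand, is not closed under multiplication: already for $k=1$ it is just the set of linear functionals $P\mapsto\mathbb{E}_P[\phi]$, and $\mathbb{E}_P[\phi]\,\mathbb{E}_P[\psi]$ is in general not of the form $\mathbb{E}_P[\chi]$ (testing on Dirac measures would force $\chi=\phi\psi$, which then fails for any non-Dirac $P$ with $\phi=\psi$ nonconstant). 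So Stone--Weierstrass cannot be invoked for $\mathcal{A}$ as defined. The repair is immediate: take instead the unital algebra generated by the maps $(P_1,\dots,P_k)\mapsto\mathbb{E}_{P_j}[\phi]$ with $\phi\in C(K_j;\R)$, i.e.\ allow several expectation factors per coordinate in each monomial. This is an algebra, contains the constants, separates points by exactly your Riesz-type argument, and its elements are still of the form $F(\mathbb{E}_{P_1}[\vec{\phi}_1],\dots,\mathbb{E}_{P_k}[\vec{\phi}_k])$ with $\vec{\phi}_j$ continuous and $F$ a polynomial --- no longer multilinear, but your second stage only uses (uniform) continuity of $F$ on the compact product of the enlarged balls, so nothing downstream changes. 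With that correction, your ordering of the approximations (inner nets first, which fixes the compact set on which the outer net $g$ is chosen) and the error bound of the form $\epsilon+\omega_F(\epsilon)$ are fine; the closing worry about finite-sample versions of the expectations is outside the scope of the theorem, which concerns exact expectations.

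This is a genuinely different route from the paper's. The paper first proves a lemma in which the outer function is already a network with activation $h$ and a prescribed number $l$ of hidden layers while the inner functions are only bounded continuous; this is done by induction on $l$ using Stinchcombe's (1999) Stone--Weierstrass-type theorem for spans of $h$ composed with a separating vector space of functionals, and analyticity of $h$ is what verifies Stinchcombe's hypothesis. The theorem then follows by replacing the inner functions with networks via the classical universal approximation theorem, propagating the error through the Lipschitz constant of $g$ --- which is where Lipschitz continuity of $h$ enters. Your argument replaces Stinchcombe's theorem by plain Stone--Weierstrass plus a second application of classical universal approximation to the outer polynomial, and controls the composition through the modulus of continuity of that polynomial rather than a Lipschitz bound on $g$; it is more elementary and never actually uses analyticity or Lipschitz continuity of $h$. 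What the paper's induction buys is the sharper form of its lemma (density for every fixed depth $l$ of the outer network); under the reading of the theorem in which the depths and widths are existentially quantified, your route establishes the same conclusion.
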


\section{Experimental Results}\label{sec:results}
We show some results for the experimental comparisons in this section. The detailed neural network architectures are given in Supplementary Material. We emphasize that although measure-conditional discriminators $D_{mc}$ have more inputs than vanilla ones, the neural networks for both are designed to have almost the same number of parameters for the same problem. For each set-up in Section~\ref{sec:2DandImage} and \ref{sec:Stochastic} we run the code with three different random seeds; the colored lines and shaded areas in the figures represent the mean and standard deviation.

\subsection{2D Distributions and Image Generation}\label{sec:2DandImage}

We first compare the vanilla discriminator and $D_{mc}$ for different GAN setp-ups on 2D problems. In particular, we test  the vanilla GANs and WGAN-GP with different discriminator/generator iteration ratios, and $(\beta_1,\beta_2)$ in the Adam optimizer (the initial learning rate is set as 0.0001). 

\begin{figure}[H]
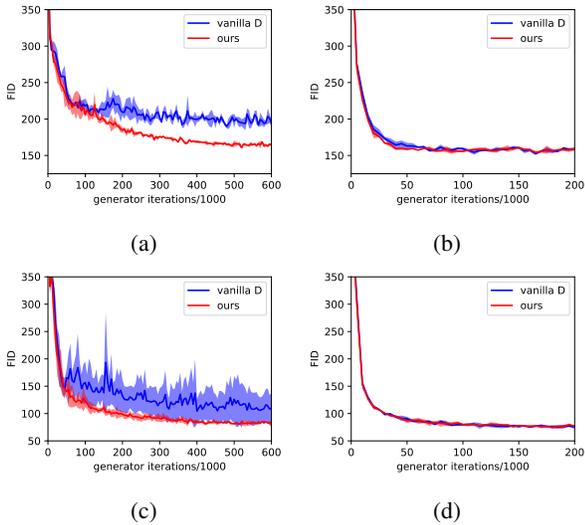

     \centering
    \begin{subfigure}[b]{0.23\textwidth}
    \includegraphics[width=\textwidth]{Plots/Image-sys-CIFAR10-1-0509.pdf}
    \caption{}
    \end{subfigure}
    \begin{subfigure}[b]{0.23\textwidth}
    \includegraphics[width=\textwidth]{Plots/Image-sys-CIFAR10-5-0509.pdf}
    \caption{}
    \end{subfigure}
    \begin{subfigure}[b]{0.23\textwidth}
    \includegraphics[width=\textwidth]{Plots/Image-sys-celebA-1-0509.pdf}
    \caption{}
    \end{subfigure}
    \begin{subfigure}[b]{0.23\textwidth}
    \includegraphics[width=\textwidth]{Plots/Image-sys-celebA-5-0509.pdf}
    \caption{}
    \end{subfigure}
    \caption{FID against generator iterations in image generation tasks, with various discriminator/generator iteration ratios. (a): CIFAR10, 1:1, (b): CIFAR10, 5:1, (c): CelebA, 1:1, (d): CelebA, 5:1.}
    \label{fig:Image}
\end{figure}

To evaluate the generated distribution $P = G_{\#}\cN$, we take the expectation of empirical Wasserstein-1 distance, i.e., $\overline{\widehat{W}}_1(P,Q):= \mathbb{E}_{\hat{P}_n, \hat{Q}_n}[W_1(\hat{P}_n,\hat{Q}_n)]$
as an approximation of $W_1(P,Q)$, where $\hat{P}_n$ is the (random) empirical measure of $P$ with $n=1000$ samples, similarly for $\hat{Q}_n$. We average over 100 empirical Wasserstein distances, which can be calculated via linear programming, to calculate the expectation. The target distributions and results are shown in Figure~\ref{fig:2D} with more results in Supplementary Material. It is clear that for all set-ups except WGAN-GP with iteration ratio 5:1 and $(\beta_1,\beta_2) = (0.5,0.9)$, the measure-conditional discriminator significantly outperforms the vanilla discriminator in achieving smaller Wasserstein distances or converging faster. In fact, the measure-conditional discriminator is very robust w.r.t. the versions of GANs, the iteration ratio, and the optimizer hyperparameters, achieving approximately the same performance in different set-ups, in contrast to the vanilla discriminator.

\begin{figure*}[ht]
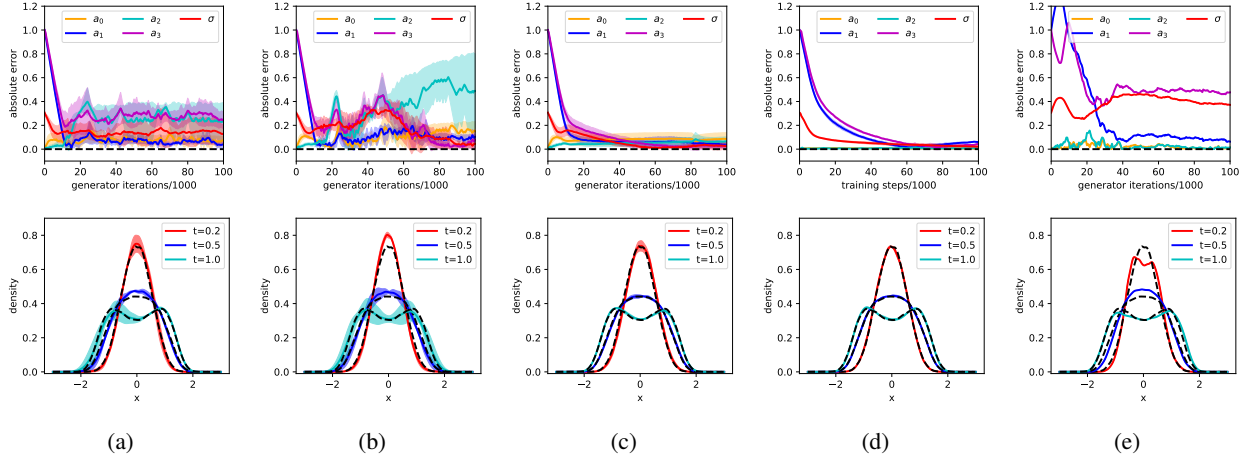

     \centering
    \begin{subfigure}[b]{0.19\textwidth}
    \includegraphics[width=\textwidth]{Plots/Inverse-Vanilla-adam.pdf}
    \end{subfigure}
    \begin{subfigure}[b]{0.19\textwidth}
    \includegraphics[width=\textwidth]{Plots/Inverse-Vanilla-adamirror.pdf}
    \end{subfigure}
    \begin{subfigure}[b]{0.19\textwidth}
    \includegraphics[width=\textwidth]{Plots/Inverse-PX-adam.pdf}
    \end{subfigure}
    \begin{subfigure}[b]{0.19\textwidth}
    \includegraphics[width=\textwidth]{Plots/Inverse-PQX12-adam.pdf}
    \end{subfigure}
    \begin{subfigure}[b]{0.19\textwidth}
    \includegraphics[width=\textwidth]{Plots/Inverse-MixBatch.pdf}
    \end{subfigure}
    \begin{subfigure}[b]{0.19\textwidth}
    \includegraphics[width=\textwidth]{Plots/den-Vanilla-adam.pdf}
    \caption{}
    \end{subfigure}
    \begin{subfigure}[b]{0.19\textwidth}
    \includegraphics[width=\textwidth]{Plots/den-Vanilla-adamirror.pdf}
    \caption{}
    \end{subfigure}
    \begin{subfigure}[b]{0.19\textwidth}
    \includegraphics[width=\textwidth]{Plots/den-PX-adam.pdf}
    \caption{}
    \end{subfigure}
    \begin{subfigure}[b]{0.19\textwidth}
    \includegraphics[width=\textwidth]{Plots/den-PQX12-adam.pdf}
    \caption{}
    \end{subfigure}
    \begin{subfigure}[b]{0.19\textwidth}
    \includegraphics[width=\textwidth]{Plots/den-MixBatch.pdf}
    \caption{}
    \end{subfigure}
    
    \caption{Comparison between different set-ups in the task of stochastic dynamic inference. (a): WGAN-GP, vanilla discriminator, Adam optimizer, (b): WGAN-GP, vanilla discriminator, Optimistic Adam optimizer, (c): WGAN-GP, $D_{mc}$, Adam optimizer, (d): WGAN-GP, $D_{sr}$, Adam optimizer, (e): BGAN with Adam optimizer. First row: the absolute error of the inferred dynamic parameters $\{a_i\}_{i=0}^3$ and $\sigma$ against generator iterations. Second row: the generated distributions at $t=0.2,0.5,1.0$, in the end of the training, with the dashed black lines showing the ground truth.
    }
    \label{fig:DynamicInf}
\end{figure*}

We then compare the vanilla discriminator and measure-conditional discriminator $D_{mc}$ for image generation tasks. Specifically, we test our method on the CIFAR10 dataset~\cite{krizhevsky2009learning} and the CelebA dataset~\cite{liu2015faceattributes}, using WGAN-GP with $(\beta_1, \beta_2)$ fixed as $(0.5, 0.9)$, while two discriminator/generator iteration ratios, i.e. 1:1 and 5:1, are used. The results of Fr\'echet inception distance (FID) against the generator iterations are shown in Figure~\ref{fig:Image}.

For both tasks, while the difference between the two discriminators is negligible if the iteration ratio is set as 5:1, the measure-conditional discriminator significantly outperforms the vanilla discriminator if the iteration ratio is 1:1, achieving similar FID as in the cases with 5:1 iteration ratio. A possible explanation is that the training of the both discriminators is saturated with 5:1 iteration ratio in these two tasks. However, with 1:1 iteration ratio, the vanilla discriminator cannot give a correct guidance to the generator since it is under-trained in each iteration, while the measure-conditional discriminator can still do so by approaching its stationary target optimum in an accumulative way.

\subsection{Stochastic Dynamic Inference}\label{sec:Stochastic}
To further show the advantage of measure-conditional discriminator, here we compare it with the vanilla discriminator on the problem of inferring stochastic dynamics from observations of particle ensembles, following the framework in \citet{yang2020generative}. 
Specifically, we consider a particle system whose distributions at $t>0$, denoted as $\rho_t$, are determined by the initial distribution $\rho_0 = \cN(0,0.2)$ and the dynamics for each particle, which is governed by the stochastic ordinary differential equation:
\begin{equation}\label{eqn:dynamic}
    dx = (a_0 + a_1 x + a_2x^2 + a_3x^3)dt + \sigma dW_t,
\end{equation}
where $a_0 = 0, a_1 = 1, a_2 = 0, a_3 = -1, \sigma = 1$, and $dW_t$ is the standard Brownian motion. We consider the scenario where we do not know $\rho_0$, $\{a_i\}_{i=0}^3$ and $\sigma$, but have observations of $10^5$ indistinguishable particles at $t = 0.2, 0.5, 1.0$, which can be viewed as samples from $\rho_{0.2}$, $\rho_{0.5}$ and $\rho_{1.0}$. Our goal is to infer $\{a_i\}_{i=0}^3$ and $\sigma$ from these observations. 

Taking the standard Gaussian noise as input, the generator $G$ is a feedforward neural network whose output distribution aims to approximate $\rho_0$, followed by a first-order numerical discretization of Equation~\ref{eqn:dynamic} with $\{a_i\}_{i=0}^3$ and $\sigma$ replaced by trainable variables (the variable for $\sigma$ is activated by a softplus function to guarantee positivity), so that the particle distributions at any $t>0$ can be generated. Note that we need to tune the feedforward neural network as well as the five trainable variables to fit the target distributions $\rho_{0.2}$, $\rho_{0.5}$ and $\rho_{1.0}$ simultaneously.

We compare the following set-ups in WGAN-GP: (a) vanilla discriminators with Adam optimizer, (b) vanilla discriminators with Optimistic Adam optimizer \cite{daskalakis2018training}, which is the combination of Adam and optimistic mirror descent, (c) $D_{mc}$ in Equation~\ref{eqn:DxP} with Adam optimizer, (d) $D_{sr}$ in Equation~\ref{eqn:DxPQ} with Adam optimizer. We emphasize that the discriminator/generator iteration ratio is set as 5:1 and $(\beta_1, \beta_2) = (0.5,0.9)$, for which the measure-conditional discriminator does not outperform the vanilla one in Section~\ref{sec:2DandImage}.
We also compare with another version of GAN, i.e., (e) BGAN~\cite{lucas2018mixed}, where the discriminator also takes a distribution (instead of individual samples) as input. In short, the BGAN discriminator takes the mixture of real and generated samples as input and aims to tell the ratio of real samples.

For set-up (a,b,c,e) we have to use three discriminators, denoted as $D_{0.2}, D_{0.5},D_{1.0}$, to handle $\rho_{0.2}$, $\rho_{0.5}$, $\rho_{1.0}$, respectively. The losses for the discriminators and the generator are 
\begin{equation}
\begin{aligned}
    L_{D_{t}} &= L_{d}(G, D_{t}, \rho_{t}), t = 0.2, 0.5, 1.0 \\
    L_{G} &= \sum_{t\in S} L_{g}(G, D_{t}, \rho_{t}), S = \{0.2, 0.5, 1.0\}\\
\end{aligned}
\end{equation}
where $L_{d}(G, D_{t}, \rho_{t})$ and $L_{g}(G, D_{t}, \rho_{t})$ are discriminator and generator loss functions, given generator $G$, discriminator $D_{t}$, and a single target distribution $\rho_{t}$. We only need one $D_{sr}$ in set-up (d) since $D_{sr}$ can also take various $\rho_t$ as input. In particular, the discriminator and generator loss functions for $D_{sr}$ are  
$L_{D_{sr}} = \sum_{t\in S}  L_{d}(G, D_{sr}, \rho_{t})$ and $L_{G} = \sum_{t\in S} L_{g}(G, D_{sr}, \rho_{t})$, respectively.

In Figure~\ref{fig:DynamicInf} we visualize the results for the inferred dynamic parameters $\{a_i\}_{i=0}^3$ and $\sigma$, as well as the generated distributions at $t=0.2,0.5,1.0$ for each set-up. More results are presented in Supplementary Material. Note that $D_{mc}$ significantly outperforms the vanilla discriminator with the Adam or Optimistic Adam optimizer, even with a 5:1 iteration ratio. $D_{sr}$ achieves results as good as if not better than $D_{mc}$, and the performance is almost independent of the random seed. Moreover, since only one discriminator is involved, set-up (d) has less than half discriminator parameters compared with other setups. Such a difference in the model size will be even larger for problems with more time instants. As for BGAN, two out of three runs encountered the ``NAN'' issue, while the rest one did not outperform WGAN-GP with a measure-conditional discriminator.

\subsection{Surrogate Model for KL Divergence}

We consider the problem of approximating $D_{KL}(P||Q)$ by using the surrogate model $D_{sr}$. We set $\mu_{train}$ and $\mu_{test}$ as probability measures on $d$-dimensional Gaussian distributions, denoted as $\cN(m, \Sigma)$. We set $m_i \sim U([0.15, 0.5]\cup [-0.5,-0.15])$ for $\mu_{train}$, while $m_i \sim U([-0.15,0.15])$ for $\mu_{test}$, where the subscripts represent the component indices. For both $\mu_{train}$ and $\mu_{test}$ we set $\sqrt{\Sigma_{i,i}} \sim U([0.5,1.0])$, and the correlation coefficient $\Sigma_{i,j}/\sqrt{\Sigma_{i,i}\Sigma_{j,j}}\sim U([-0.5,0.5]) $ for $i\neq j$.

The surrogate model is trained with $P$ and $Q$ i.i.d sampled from $\mu_{train}$, with the batch size set as 100 for $(P,Q)$ pairs, and 1000 samples for each $P$ and $Q$, i.e., $n_1 = n_2 = 1000$ in Equation~\ref{eqn:DxPQ}. The surrogate model is then tested on three cases: (a): $P\sim\mu_{train},Q\sim\mu_{train}$, (b): $P\sim\mu_{train},Q\sim\mu_{test}$, (c): $P\sim\mu_{test},Q\sim\mu_{test}$. Note that there are $d(d+3)$ degrees of freedom for each $(P,Q)$ pair. For such cases there exists an analytical formula for $D_{KL}(P||Q)$ as a ground truth: 
$D_{KL}(P||Q) = \frac{1}{2}[\log\frac{|\Sigma_q|}{|\Sigma_p|} - d + tr(\Sigma_q^{-1}\Sigma_p)+ (m_p-m_q)^T\Sigma_q^{-1}(m_p-m_q)]$.

We compare the surrogate model against direct calculation via $D_{KL}(P,Q) = \E_{x\sim P}[\log(P(x)) - \log(Q(x))]$, where the densities are estimated via the kernel density estimation, for dimensionality $d=2$. In Figure~\ref{fig:KL} we quantify the accuracy against floating-point operations (FLOPs). Note that the computational cost grows linearly w.r.t. the sample size in the surrogate model, while quadratically w.r.t. the sample size in the direct calculation. The surrogate model outperforms the direct calculation in that it can achieve smaller errors with the same FLOPs for all the three $(P,Q)$ distributions in test. 
In Supplementary Material, we show scatter plots of the inference against the ground truth for dimensionality $d=2$ and 3, as well as the results of transferring the surrogate model to GANs.

\begin{figure}[ht]
     \centering
    \begin{subfigure}[b]{0.38\textwidth}
    \centering
    \includegraphics[width=1.0\textwidth]{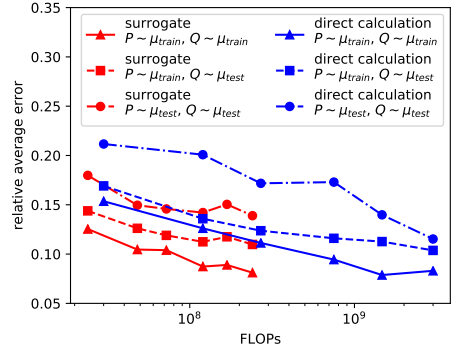}
    \end{subfigure}
    \caption{Comparison between the surrogate model and direct calculation for the KL divergence. For each line, the sample size, i.e., $n_1$ and $n_2$ in Equation~\ref{eqn:DxPQ}, increases from 1000 to 10000, to improve accuracy in the cost of FLOPs.}
    \label{fig:KL}
\end{figure}

\section{Summary and Discussion}\label{sec:Summary}
In this paper we propose measure-conditional discriminators as a plug-and-play module for a variety of GANs. Conditioned on the generated distributions so that the target optimum is stationary during training, the measure-conditional discriminators are more robust w.r.t. the GAN losses, discriminator/generator iteration ratios, and optimizer hyperparameters, compared with the vanilla ones. A variant of the measure-conditional discriminator can also be employed in the scenarios with multiple target distributions, or as surrogate models of statistical distances.

Note that even outdated generated distributions can be used to training the measure-conditional discriminator. It is worth to study if training the discriminator with generated distributions from a replay buffer, which contains the generated distributions in history just as in off-policy reinforcement learning, can further improve the performance.
Also, as a proof of concept, the neural network architectures in this paper have a very straight-forward form, leaving a lot of room for improvements. For example, different weights can be assigned to the samples of the input distributions, which is similar to importance sampling in statistics or the attention mechanism in deep learning. Moreover, the statistical distance surrogate can be applied as a building block in replacement of direct calculation in more complicated models. We leave these tasks for future research.

{\bf Acknowledgements}

We acknowledge support from the DOE PhILMs project (No. DE-SC0019453) and OSD/AFOSR MURI Grant FA9550-20-1-0358.

\newpage
\bibliographystyle{icml2021}
\bibliography{cite.bib}

\clearpage
\section{Supplementary Material}
\subsection{Neural Network Architecture}
In this section, we present the neural network architectures used in the main text. All noises into the generators are multi-variant standard Gaussians. An additional sigmoid activation is applied to the discriminator outputs in vanilla GANs. We emphasize that the vanilla discriminator and measure-conditional discriminator $D_{mc}$ share almost the same number of parameters in the same problem. In image generation tasks, the convolutional layers, denoted as ``conv'' as follows, have kernels of size $5\times5$, stride of 2, and ``same'' padding.

\textbf{2D problem}: 

generator (33,666 parameters): $2\xrightarrow{\text{dense}}128\xrightarrow{}\text{ReLU}\xrightarrow{\text{dense}}128\xrightarrow{}\text{ReLU}\xrightarrow{\text{dense}}128\xrightarrow{}\text{ReLU}\xrightarrow{\text{dense}}2$; vanilla discriminator (33,537 parameters): $2\xrightarrow{\text{dense}}128\xrightarrow{}\text{ReLU}\xrightarrow{\text{dense}}128\xrightarrow{}\text{ReLU}\xrightarrow{\text{dense}}128\xrightarrow{}\text{ReLU}\xrightarrow{\text{dense}}1$; $D_{mc}$ (33,921 parameters), $f$ and $g$: $2\xrightarrow{\text{dense}}128\xrightarrow{} \text{ReLU}\xrightarrow{\text{dense}}64$, $h$: $128\xrightarrow{\text{dense}}128\xrightarrow{} \text{ReLU}\xrightarrow{\text{dense}}1$.

\textbf{CIFAR10}: 
generator (1,565,955 parameters): $128\xrightarrow{\text{dense}}4096\xrightarrow{}\text{BatchNorm}\xrightarrow{}\text{ReLU}\xrightarrow{\text{reshape}}(4,4,256)\xrightarrow{\text{conv}}(8,8,128)\xrightarrow{}\text{BatchNorm}\xrightarrow{}\text{ReLU}\xrightarrow{\text{conv}}(16,16,64)\xrightarrow{}\text{BatchNorm}\xrightarrow{}\text{ReLU}\xrightarrow{\text{conv}}(32,32,3)\xrightarrow{}\text{tanh}$; vanilla discriminator (1,291,521 parameters): $(32,32,3)\xrightarrow{\text{conv}}(16,16,64)\xrightarrow{}\text{LeakyReLU}\xrightarrow{\text{conv}}(8,8,128)\xrightarrow{}\text{LeakyReLU}\xrightarrow{\text{conv}}(4,4,256)\xrightarrow{\text{flatten}}4096\xrightarrow{\text{dense}}64\xrightarrow{}\text{LeakyReLU}\xrightarrow{\text{dense}}1$; $D_{mc}$ (1,296,449 parameters), $f$: $(32,32,3)\xrightarrow{\text{conv}}(16,16,64)\xrightarrow{}\text{LeakyReLU}\xrightarrow{\text{conv}}(8,8,64)\xrightarrow{}\text{LeakyReLU}\xrightarrow{\text{conv}}(4,4,64)\xrightarrow{\text{flatten}}1024$, $g$: $(32,32,3)\xrightarrow{\text{conv}}(16,16,64)\xrightarrow{}\text{LeakyReLU}\xrightarrow{\text{conv}}(8,8,128)\xrightarrow{}\text{LeakyReLU}\xrightarrow{\text{conv}}(4,4,192)\xrightarrow{\text{flatten}}3072$, $h$: $4096\xrightarrow{\text{dense}}64\xrightarrow{}\text{LeakyReLU}\xrightarrow{\text{dense}}1$.

\textbf{CelebA}: 
generator (1,331,843 parameters): $128\xrightarrow{\text{dense}}8192\xrightarrow{}\text{BatchNorm}\xrightarrow{}\text{ReLU}\xrightarrow{\text{reshape}}(8,8,128)\xrightarrow{\text{conv}}(16,16,64)\xrightarrow{}\text{BatchNorm}\xrightarrow{}\text{ReLU}\xrightarrow{\text{conv}}(32,32,32)\xrightarrow{}\text{BatchNorm}\xrightarrow{}\text{ReLU}\xrightarrow{\text{conv}}(64,64,3)\xrightarrow{}\text{tanh}$; vanilla discriminator (1,307,457 parameters): $(64,64,3)\xrightarrow{\text{conv}}(32,32,32)\xrightarrow{}\text{LeakyReLU}\xrightarrow{\text{conv}}(16,16,64)\xrightarrow{}\text{LeakyReLU}\xrightarrow{\text{conv}}(8,8,128)\xrightarrow{\text{flatten}}8192\xrightarrow{\text{dense}}128\xrightarrow{}\text{LeakyReLU}\xrightarrow{\text{dense}}1$; $D_{mc}$ (1,309,921 parameters), $f$: $(32,32,3)\xrightarrow{\text{conv}}(32,32,32)\xrightarrow{}\text{LeakyReLU}\xrightarrow{\text{conv}}(16,16,32)\xrightarrow{}\text{LeakyReLU}\xrightarrow{\text{conv}}(8,8,32)\xrightarrow{\text{flatten}}2048$, $g$: $(64,64,3)\xrightarrow{\text{conv}}(32,32,32)\xrightarrow{}\text{LeakyReLU}\xrightarrow{\text{conv}}(16,16,64)\xrightarrow{}\text{LeakyReLU}\xrightarrow{\text{conv}}(8,8,96)\xrightarrow{\text{flatten}}6144$, $h$: $8192\xrightarrow{\text{dense}}128\xrightarrow{}\text{LeakyReLU}\xrightarrow{\text{dense}}1$.

\textbf{Stochastic Dynamic Inference}: 
generator for $\rho_0$ (33,409 parameters): $1\xrightarrow{\text{dense}}128\xrightarrow{}\text{tanh}\xrightarrow{\text{dense}}128\xrightarrow{}\text{tanh}\xrightarrow{\text{dense}}128\xrightarrow{}\text{tanh}\xrightarrow{\text{dense}}1$; vanilla discriminator (49,921$\times$3 parameters): $1\xrightarrow{\text{dense}}128\xrightarrow{}\text{LeakyReLU}\xrightarrow{\text{dense}}128\xrightarrow{}\text{LeakyReLU}\xrightarrow{\text{dense}}128\xrightarrow{}\text{LeakyReLU}\xrightarrow{\text{dense}}128\xrightarrow{}\text{LeakyReLU}\xrightarrow{\text{dense}}1$; $D_{mc}$ (50,177$\times$3 parameters), $f$ and $g$: $1\xrightarrow{\text{dense}}128\xrightarrow{} \text{LeakyReLU}\xrightarrow{\text{dense}}64$, $h$: $128\xrightarrow{\text{dense}}128\xrightarrow{} \text{LeakyReLU}\xrightarrow{\text{dense}}128\xrightarrow{} \text{LeakyReLU}\xrightarrow{\text{dense}}1$.  $D_{sr}$ (66,881 parameters), $f_1$, $f_2$ and $g$: $1\xrightarrow{\text{dense}}128\xrightarrow{} \text{LeakyReLU}\xrightarrow{\text{dense}}64$, $h$: $192\xrightarrow{\text{dense}}128\xrightarrow{} \text{LeakyReLU}\xrightarrow{\text{dense}}128\xrightarrow{} \text{LeakyReLU}\xrightarrow{\text{dense}}1$. The discriminator architecture in BGAN is from the original paper~\cite{lucas2018mixed}, with 4 hidden layers, each of width 128, and LeakyReLU activation, 99,329$\times$3 parameters in total.

\textbf{KL Surrogate Model}: 
$D_{sr}$ (6,137 parameters), $f_1$, $f_2$ and $g$: $2\xrightarrow{\text{dense}}32\xrightarrow{} \text{tanh}\xrightarrow{\text{dense}}32\xrightarrow{} \text{tanh}\xrightarrow{\text{dense}}8$, $h$: $24\xrightarrow{\text{dense}}32\xrightarrow{} \text{tanh}\xrightarrow{\text{dense}}32\xrightarrow{} \text{tanh}\xrightarrow{\text{dense}}1$.

\subsection{Proof of the Universal Approximation Theorem}
In this section, we provide the proof for Theorem 5.1 in the main text. Before that, we provide a useful lemma and its proof.

The following lemma provides an universal approximation theorem for the functions in the following form
\begin{equation}\label{eqn:appendix_abs_NN}
    \fabsnn(P_1,\dots,P_k) = g(\E_{P_1}[f_1],\dots, \E_{P_k}[f_k]),
\end{equation}
where $g$ is a neural network with the activation function $h$ and $l$ hidden layers, and each $f_j$ is any bounded continuous function from $\R^{n_j}$ to $\R^{m_j}$ for any positive integer $m_j$.
We denote the set containing such functions by $\SglMn$.
In the following lemma, we show that any continuous function on $\prod_{j=1}^k\tightset_j$ equipped with the product weak topology, where each $\tightset_j$ is a tight set, can be approximated using some function in $\SglMn$. 

\begin{lemma}\label{lem:uni1}
Let $h\colon \R\to\R$ be an analytic and Lipschitz continuous non-polynomial activation function. Let $k, n_1,\dots, n_k\in\Z^+$ be positive integers. Then, for each $l\in\Z^+$, $\SglMn$ is dense in $C(\prod_{j=1}^k\tightset_j;\R)$ with respect to the uniform norm topology, where $\tightset_j$ is an arbitrary tight set in $\radonj$ for each $j=1,\dots,k$.
\end{lemma}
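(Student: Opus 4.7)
The plan is to combine Prokhorov's theorem, Stone--Weierstrass, and the classical universal approximation theorem for feedforward networks. First, since each $\tightset_j$ is tight in $\radonj$, Prokhorov's theorem yields that its weak closure is compact, so it suffices to prove density with each $\tightset_j$ replaced by that compact closure. Under the product weak topology, $\prod_{j=1}^k \tightset_j$ is then a compact Hausdorff space, and for every bounded continuous $\phi\colon \R^{n_j}\to\R$ the evaluation $P\mapsto \E_P[\phi]$ is a continuous real-valued map on $\tightset_j$.

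Next I would apply Stone--Weierstrass to an auxiliary subalgebra. Let $\mathcal{A}\subset C(\prod_j\tightset_j;\R)$ consist of all functions of the form $(P_1,\ldots,P_k)\mapsto Q(\E_{P_1}[\phi_1^{(1)}],\ldots,\E_{P_k}[\phi_{r_k}^{(k)}])$, where $Q$ is a polynomial on $\R^{r_1+\cdots+r_k}$ and each $\phi_i^{(j)}$ is a bounded continuous real function on $\R^{n_j}$. Closure of $\mathcal{A}$ under addition and multiplication follows by concatenating the coordinate vectors and taking sums or products of the polynomials, and constants are in $\mathcal{A}$ by taking $Q$ constant. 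For point separation, whenever $P_j\neq P_j'$ a standard consequence of the Portmanteau characterization of weak convergence gives a bounded continuous $\phi$ with $\E_{P_j}[\phi]\neq \E_{P_j'}[\phi]$. Stone--Weierstrass then yields that $\mathcal{A}$ is uniformly dense in $C(\prod_j\tightset_j;\R)$.

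The last step lifts any $F\in\mathcal{A}$ into $\SglMn$. Writing $F(P_1,\ldots,P_k)=Q(\E_{P_1}[f_1],\ldots,\E_{P_k}[f_k])$ with $f_j:=(\phi_1^{(j)},\ldots,\phi_{r_j}^{(j)})$, each $f_j$ is bounded continuous, and the expectations $\E_{P_j}[f_j]$ lie in the closed convex hull of the range of $f_j$, which is a compact set $B_j\subset\R^{r_j}$. The polynomial $Q$ is continuous on the compact set $B:=\prod_j B_j$, so by the classical universal approximation theorem for feedforward networks with analytic non-polynomial activation, for any $\epsilon>0$ there is $g\in \SglRm$ (with $m=\sum_j r_j$) such that $\sup_{y\in B}|g(y)-Q(y)|<\epsilon$. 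The corresponding $\fabsnn(P_1,\ldots,P_k)=g(\E_{P_1}[f_1],\ldots,\E_{P_k}[f_k])$ then lies in $\SglMn$ and satisfies $\|\fabsnn-F\|_\infty<\epsilon$. Concatenating with the Stone--Weierstrass step yields the required density in $C(\prod_j\tightset_j;\R)$.

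The main obstacle I anticipate is the fixed-depth version of the universal approximation theorem: textbook statements cover one hidden layer of arbitrary width, but the lemma prescribes each $l\in\Z^+$. For analytic non-polynomial $h$ this is routine—on any compact set the identity can be approximated to arbitrary accuracy by a one-hidden-layer network (pick a point where $h$ is not affine and use its Taylor expansion), so the extra $l-1$ layers are absorbed by iterating such near-identity blocks. Everything else (tightness reducing to compactness via Prokhorov, separation of measures by bounded continuous test functions, and the algebra structure of $\mathcal{A}$) is standard, so once depth is handled the proof is essentially a three-line assembly.
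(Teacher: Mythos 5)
Your proposal is correct, but it takes a genuinely different route from the paper's. The paper proves the lemma by induction on the depth $l$ of $g$, invoking Stinchcombe's abstract universal approximation theorem on the space of measures at each stage: for $l=1$ it takes the auxiliary vector space to be the span of the generalized moments $(P_1,\dots,P_k)\mapsto\E_{P_j}[f]$ with $f$ bounded continuous, verifies constants and point separation (via the duality with $C_0(\R^{n_j};\R)$; your Portmanteau argument establishes the same fact), and then rewrites elements of $\mathrm{Span}(h\circ\mathcal{A})$ as one-hidden-layer networks of moments; the induction step repeats this with $\mathcal{A}=\SglMn$ to gain one layer at a time. You instead decouple the measure-theoretic and finite-dimensional halves: Stone--Weierstrass applied to the algebra of polynomials in finitely many moments gives density of that algebra in $C(\prod_{j}\tightset_j;\R)$, and the classical universal approximation theorem on a compact set then lifts each polynomial into $\SglMn$; your observation that $\E_{P_j}[f_j]$ always lies in the compact closed convex hull of the bounded range of $f_j$ is exactly what makes the approximation of $Q$ by $g$ uniform over all admissible measures, so the triangle inequality closes the argument. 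Both proofs share the initial reduction from tight to precompact/compact $\tightset_j$. What your route buys: it is more modular and elementary, avoids the induction on $l$, and actually needs less than the stated hypotheses for this lemma (continuity and non-polynomiality of $h$ suffice for the one-hidden-layer step; the Lipschitz assumption is only used later, in the paper's proof of the main theorem, to replace the bounded continuous $f_j$ by neural networks). What the paper's route buys: the prescribed depth $l$ is handled automatically by the induction, whereas in your argument it is the one point requiring care; your near-identity-block fix is sound (e.g.\ compose the one-hidden-layer approximant of $Q$ with $l-1$ scalar near-identity blocks acting on a compact interval containing its range), but the error propagation through the composition should be written out rather than declared routine.
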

\begin{proof}
Since any tight set in $\radonj$ is a precompact set under weak topology, then to prove the conclusion it suffices to assume $\tightset_j$ is a compact set in $\radonj$. 
Let $\tightset_j$ be any compact set in $\radonj$ for $j=1,\dots,k$. 
Let $F\colon \prod_{j=1}^k\tightset_j\to\R$ be an arbitrary continuous function w.r.t the product of the weak topology.
Let $\epsilon>0$ and $l\in\Z^+$. It suffices to prove there exist positive integers $m_1,\dots,m_k$ in $\Z^+$, a neural networks $g$ in $\SglRm$ with $m:=\sum_{j=1}^km_{j}$ and bounded continuous functions $f_1\in C_b(\R^{n_1};\R^{m_{1}}),\dots,f_k\in C_b(\R^{n_k};\R^{m_{k}})$ satisfying
\begin{equation}\label{eqt: lemma1_eq0}
    \sup_{\substack{P_j\in \tightset_j\\\forall\, j\in\{1,\dots,k\}}} |F(P_1,\dots, P_k) - g(\E_{P_1}[f_{1}],\dots, \E_{P_k}[f_{k}])| \leq \epsilon.
\end{equation}
We prove this statement by induction on $l$. We apply~\cite{Stinchcombe1999neural} in each step. 
Since $h$ is analytic, and $h$ is not a polynomial, then by Thm.2.3 in~\cite{Stinchcombe1999neural}, $S_{h,1}^1$ satisfies the assumption of Thm.5.1 in~\cite{Stinchcombe1999neural}.
First, we consider the case when $l=1$. Let $\mathcal{A}$ in Thm.5.1 in~\cite{Stinchcombe1999neural} be the vector space of measurable functions from $\prod_{j=1}^k\radonj$ to $\R$ defined by
\begin{equation*}
\begin{split}
\mathcal{A} &:= Span\{(P_1,\dots,P_k)\mapsto \E_{P_j}[f]\colon \\
&\quad\quad\quad\quad\quad\quad f\in C_b(\R^{n_j}; \R), \, j\in\{1,\dots,k\}\}\\
&= \Bigg\{(P_1,\dots,P_k)\mapsto \sum_{j=1}^k\E_{P_j}[f_j]\colon \\
&\quad\quad\quad\quad\quad\quad f_j\in C_b(\R^{n_j}; \R), \, \forall\, j\in\{1,\dots,k\}\Bigg\}.
\end{split}
\end{equation*}
Then, $\mathcal{A}$ contains any constant function, since for any constant function $f\equiv C$, we have $\E_{P_j}[f]= C$ for any $P_j\in\radonj$. Recall that each probability space on $\R^{n_j}$ is a subset of the space of Radon measures on $\R^{n_j}$, and the space of Radon measures is the dual space of $C_0(\R^{n_j};\R)$, which denotes the set of continuous functions from $\R^{n_j}$ to $\R$ which vanish at infinity.
As a result, for any distinct measures $P_j$ and $Q_j$ in $\radonj$, there exists a function $f\in C_0(\R^{n_j};\R)\subset C_b(\R^{n_j};\R)$ satisfying $\E_{P_j}[f]\neq \E_{Q_j}[f]$. Therefore, $\mathcal{A}$ separates points in $\prod_{j=1}^k\radonj$. Then, $\mathcal{A}$ satisfies the assumptions in Thm.5.1 in~\cite{Stinchcombe1999neural}, which implies that for each $\epsilon>0$, there exists a function $H$ in $Span(h\circ \mathcal{A})$ satisfying
\begin{equation}\label{eqt: lem1_limit}
    \sup_{\substack{P_j\in \tightset_j\\\forall\, j\in\{1,\dots,k\}}} |F(P_1,\dots, P_k) - H(P_1,\dots, P_k)| \leq \epsilon.
\end{equation}
Since $H$ is a function in $Span(h\circ \mathcal{A})$, then there exist a positive integer $\tilde{m}\in\Z^+$, real numbers $\alpha_1,\dots,\alpha_{\tilde{m}}\in\R$, and bounded continuous functions $\tilde{f}_{ij}\in C_b(\R^{n_j};\R)$ for each $i\in\{1,\dots, \tilde{m}\}$ and $j\in\{1,\dots,k\}$, such that there holds 
\begin{equation*}
    H(P_1,\dots,P_k) = \sum_{i=1}^{\tilde{m}} \alpha_i h\left(\sum_{j=1}^k\E_{P_j}[\tilde{f}_{ij}]\right).
\end{equation*}
Now, we prove that $H$ is a function in $\SgkMnKone$.
For each $j\in\{1,\dots, k\}$, let $f_j\colon \R^{n_j}\to\R^{\tilde{m}}$ be defined by
\begin{equation*}
    f_j(x) := (\tilde{f}_{1j}(x), \dots, \tilde{f}_{\tilde{m}j}(x)),\quad \forall x\in\R^{n_j}.
\end{equation*}
And define $g\colon \R^{k\tilde{m}}\to \R$  by
\begin{equation*}
g(x) := \sum_{i=1}^{\tilde{m}}\alpha_i h( w_i\cdot x), \quad \forall x\in\R^{k\tilde{m}},
\end{equation*}
where $w_i:=(e_i,\dots, e_i)\in \R^{k\tilde{m}}$ is a vector repeating $e_i$ for $k$ times (where $e_i$ denotes the $i$-th standard basis vector in $\R^{\tilde{m}}$).
Then, we have $f_j\in C_b(\R^{n_j};\R^{\tilde{m}})$ and $g\in \SgoneRktildem$. Moreover, after some computations, we obtain
\begin{equation} \label{eqt: lem1_H}
\begin{split}
    &g(\E_{P_1}[f_1],\dots, \E_{P_k}[f_k]) \\
    =\,& \sum_{j=1}^{\tilde{m}}\alpha_{i} h\left(w_i\cdot \left(\E_{P_1}[f_{1}], \dots,\E_{P_k}[f_{k}]\right)\right)\\
    =\,& \sum_{j=1}^{\tilde{m}}\alpha_{i} h\left(\sum_{j=1}^k e_i\cdot \E_{P_j}[f_{j}]\right)\\
    =\,& \sum_{j=1}^{\tilde{m}}\alpha_{i} h\left(\sum_{j=1}^k \E_{P_j}[e_i\cdot f_{j}]\right)\\
    =\,& \sum_{j=1}^{\tilde{m}}\alpha_{i} h\left(\sum_{j=1}^k \E_{P_j}[\tilde{f}_{ij}]\right)\\
    =\,& H(P_1,\dots, P_k).
\end{split}
\end{equation}
As a result,~\eqref{eqt: lemma1_eq0} is proved for the case of $l=1$ according to~\eqref{eqt: lem1_limit} and~\eqref{eqt: lem1_H}.

Now, assume~\eqref{eqt: lemma1_eq0} holds for some $l\in\Z^+$, and we prove the conclusion for $l+1$. Let $\mathcal{A}$ in Thm.5.1 in~\cite{Stinchcombe1999neural} be the vector space $\SglMn$. Then, $\mathcal{A}$ contains constant functions by setting $f_1,\dots,f_k$ to be constant functions in~\eqref{eqn:appendix_abs_NN}. Since $C_b(\R^{n_j};\R^{m_j})$ separates measures in $\radonj$ for each $j=1,\dots,k$ as we proved in the case of $l=1$, and the space of neural networks $\SglRm$ also separates points in $\R^m$, then the space $\mathcal{A}$ separates points in $\prod_{j=1}^k\radonj$. Therefore, $\mathcal{A}$ satisfies the assumptions in Thm.5.1 in~\cite{Stinchcombe1999neural}, which implies that
for each $\epsilon>0$, there exist $\tilde{m}\in\Z^+$, real numbers $\alpha_1,\dots, \alpha_{\tilde{m}}$ in $\R$, and functions $H_1,\dots, H_{\tilde{m}}$ in $\SglMn$ satisfying
\begin{equation}\label{eqt: lem1_eq2}
\sup_{\substack{P_j\in \tightset_j\\\forall\, j\in\{1,\dots,k\}}} \left|F(P_1,\dots, P_k) - \sum_{i=1}^{\tilde{m}}\alpha_{i}h(H_{i}(P_1,\dots,P_k))\right| \leq \epsilon.
\end{equation}
For each $i\in\{1,\dots,\tilde{m}\}$, 
since $H_{i}$ is a function in $\SglMn$, there exist positive integers $\tilde{m}_{i1},\dots, \tilde{m}_{ik}\in\Z^+$, bounded continuous functions
$\tilde{f}_{ij}\in C_b(\R^{n_j};\R^{\tilde{m}_{ij}})$ for each $j\in\{1,\dots, k\}$, and a function $\tilde{g}_{i}\in \SglRtildemi$ with $\tilde{m}_{i0}:=\sum_{j=1}^k\tilde{m}_{ij}$, 
such that $H_i(P_1,\dots, P_k) = \tilde{g}_{i}(\E_{P_1}[\tilde{f}_{i1}], \dots, \E_{P_k}[\tilde{f}_{ik}])$ holds for each $P_1\in\radonone, \dots, P_k\in\radonk$. As a result, we have
\begin{equation*}
\begin{split}
    &\sum_{i=1}^{\tilde{m}}\alpha_{i}h(H_{i}(P_1,\dots, P_k))\\
    =\,& \sum_{i=1}^{\tilde{m}}\alpha_{i}h\left(\tilde{g}_{i}\left(\E_{P_1}[\tilde{f}_{i1}], \dots, \E_{P_k}[\tilde{f}_{ik}]\right)\right),
\end{split}
\end{equation*}
for each $(P_1,\dots, P_k)\in\prod_{j=1}^k\radonj$.
Set $\tilde{m}_{0j}:= \sum_{i=1}^{\tilde{m}} \tilde{m}_{ij}$ and $m:= \sum_{j=1}^k\tilde{m}_{0j} = \sum_{i=1}^{\tilde{m}} \tilde{m}_{i0}$. For each $j\in\{1,\dots,k\}$, define $f_j\in C_b(\R^{n_j};\R^{\tilde{m}_{0j}})$ by
\begin{equation*}
   f_j(x) := \left(\tilde{f}_{1j}(x), \tilde{f}_{2j}(x), \dots, \tilde{f}_{\tilde{m}j}(x)\right),
\end{equation*}
for each $x\in\R^{n_j}$. For each $i\in\{1,\dots, \tilde{m}\}$, define $g_{i}\in \SglRm$ by
\begin{equation*}
    g_{i}(x_1, x_2,\dots, x_{k}) := \tilde{g}_{i}((x_1)_i,\dots, (x_k)_i),
\end{equation*}
for each $x_1\in\R^{\tilde{m}_{01}},\dots, x_k\in\R^{\tilde{m}_{0k}}$, where each $(x_j)_i\in\R^{\tilde{m}_{ij}}$ denotes the vector whose $r$-th component is the $\left(\sum_{I=1}^{i-1}\tilde{m}_{Ij} + r\right)$-th component of $x_j$.
With this notation, for each $i\in\{1,\dots,\tilde{m}\}$ and each $j\in\{1,\dots,k\}$, we have
\begin{equation*}
    \left(\E_{P_j}[f_j]\right)_i =\left(\E_{P_j}[\tilde{f}_{1j}],\dots, \E_{P_j}[\tilde{f}_{\tilde{m}j}]\right)_i
    =\E_{P_j}[\tilde{f}_{ij}].
\end{equation*}
Moreover, we define $g\in \SglponeRm$ by
\begin{equation*}
  \begin{split}
      g(x_1,x_2,\dots, x_{k}) &:= \sum_{i=1}^{\tilde{m}}\alpha_{i}h(g_{i}(x_1, x_2,\dots, x_{k})) \\
  & = \sum_{i=1}^{\tilde{m}}\alpha_{i}h\left(\tilde{g}_{i}((x_1)_i,\dots, (x_k)_i)\right)
  \end{split}  
\end{equation*}
for each $x_1\in\R^{\tilde{m}_{01}},\dots, x_k\in\R^{\tilde{m}_{0k}}$.
Then, after some computations, we obtain
\begin{equation*}
\begin{split}
    &g(\E_{P_1}[f_1],\dots, \E_{P_k}[f_k])\\
    =\,&
    \sum_{i=1}^{\tilde{m}}\alpha_{i}h\left(\tilde{g}_i\left(\left(\E_{P_1}[f_1]\right)_i, \dots, \left(\E_{P_k}[f_k]\right)_i\right)\right)\\
    =\,& \sum_{i=1}^{\tilde{m}}\alpha_{i}h\left(\tilde{g}_i\left(\E_{P_1}[\tilde{f}_{i1}], \dots, \E_{P_k}[\tilde{f}_{ik}]\right)\right)\\
    =\,&\sum_{i=1}^{\tilde{m}}\alpha_{i}h(H_{i}(P_1,\dots, P_k)).
\end{split}
\end{equation*}
Combining this with~\eqref{eqt: lem1_eq2}, we conclude that~\eqref{eqt: lemma1_eq0} holds for $l+1$. 
Therefore, the conclusion holds by induction.
\end{proof}

\textbf{Proof of Theorem 5.1}
Let $\epsilon>0$. It suffices to construct $m_1,\dots, m_k\in\Z^+$, $g\in \SglRm$ with $m:=\sum_{j=1}^k m_j$, and $f_j\in \SgljRnjmj$ for each $j=1,\dots, k$, such that there holds
\begin{equation} \label{eqt: pf_lem2_0}
    |F(P_1,\dots, P_k) - g(\E_{P_1}[f_1],\dots, \E_{P_k}[f_k])| \leq \epsilon,
\end{equation}
for any $(P_1,\dots, P_k)\in\prod_{j=1}^k\radonKj$.
Since each $K_j$ is a compact set in $\R^{n_j}$, 
then $\radonKj$ is tight in $\radonj$. 
Then, by Lemma~\ref{lem:uni1}, there exist $m_1,\dots, m_k\in\Z^+$, $g\in \SglRm$ with $m:=\sum_{j=1}^km_j$, and $\tilde{f}_j\in C_b(\R^{n_j};\R^{m_j})$ for each $j\in\{1,\dots, k\}$ satisfying
\begin{equation}\label{eqt: pf_lem2_1}
    \left|F(P_1,\dots, P_k) - g\left(\E_{P_1}[\tilde{f}_1],\dots, \E_{P_k}[\tilde{f}_k]\right)\right| < \frac{\epsilon}{2},
\end{equation}
for any $(P_1,\dots, P_k)\in\prod_{j=1}^k\radonKj$.
Since the activation function $h$ is Lipschitz, and the Lipschitz property is preserved under composition, then the function $g$ is also Lipschitz. Denote by $L>0$ the Lipschitz constant of $g$.
By the universal approximation theorem for neural networks (for instance, see~\cite{Kidger2020universal}), for each $j\in\{1,\dots,k\}$, there exists a neural network $f_j\in \SgljRnjmj$ satisfying
\begin{equation}\label{eqt: pf_lem2_2}
    \sup_{x\in K_j} \|f_j(x) - \tilde{f}_j(x)\| < \frac{\epsilon}{2L\sqrt{k}}.
\end{equation}

Now, we prove~\eqref{eqt: pf_lem2_0}. For each $j\in\{1,\dots, k\}$, let $P_j$ be an arbitrary measure in $\radonKj$.
Combining~\eqref{eqt: pf_lem2_1} and~\eqref{eqt: pf_lem2_2}, we obtain
\begin{equation*}
\begin{split}
    &\,|F(P_1,\dots, P_k) - g(\E_{P_1}[f_1], \dots, \E_{P_k}[f_k])|\\
    \leq&\, \left|g\left(\E_{P_1}[\tilde{f}_1],\dots, \E_{P_k}[\tilde{f}_k]\right) - g\left(\E_{P_1}[f_1],\dots, \E_{P_k}[f_k]\right)\right|\\
    &\quad +\left|F(P_1,\dots, P_k) - g\left(\E_{P_1}[\tilde{f}_1],\dots, \E_{P_k}[\tilde{f}_k]\right)\right| \\
    <&\, L\left\|\left(\E_{P_1}[\tilde{f}_1],\dots, \E_{P_k}[\tilde{f}_k]\right) - \left(\E_{P_1}[f_1],\dots, \E_{P_k}[f_k]\right)\right\| + \frac{\epsilon}{2}\\
    \leq&\,L\sqrt{k}\sup_{j\in\{1,\dots,k\}}\left\{\sup_{x\in K_j}\|\tilde{f}_j(x) - f_j(x)\|\right\} + \frac{\epsilon}{2}\\
    \leq&\, \epsilon,
\end{split}
\end{equation*}
where the second inequality holds by~\eqref{eqt: pf_lem2_1} and the Lipschitz property of $g$, the third inequality holds by the assumption that each $P_j$ is supported in the compact set $K_j$, and the fourth inequality holds according to~\eqref{eqt: pf_lem2_2}.
\qed

\subsection{More Results for 2D Problems}
In Figure~\ref{fig:2DSup} we show the comparison between the vanilla discriminator and measure-conditional discriminator $D_{mc}$ on three 2D problems, using vanilla GAN with 5:1 discriminator/generator iteration ratio. We encountered ``NAN'' issue occasionally with both discriminators, the corresponding runs are omitted. The measure-conditional discriminator outperforms the vanilla one, as in the main text.

\begin{figure}[ht]
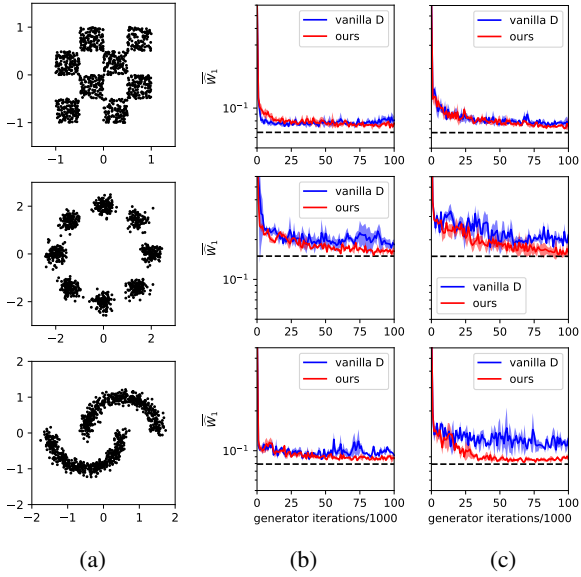

     \centering
         \centering
        \begin{subfigure}[b]{0.147\textwidth}
         \includegraphics[width=\textwidth]{Plots/All-target.pdf}
        \caption{}
         \end{subfigure}
         \begin{subfigure}[b]{0.1695\textwidth}
         \includegraphics[width=\textwidth]{Plots/All-Vanilla-5-0509.pdf}
        \caption{}
         \end{subfigure}
        \begin{subfigure}[b]{0.13\textwidth}
         \includegraphics[width=\textwidth]{Plots/All-Vanilla-5-default.pdf}
        \caption{}
         \end{subfigure}
        \caption{More results for the comparison between the vanilla discriminator and measure-conditional discriminator (ours) in three 2D problems. (a): The three target distributions. (b): Vanilla GAN, 5:1, (0.5, 0.9), (c): Vanilla GAN, 5:1, (0.9, 0.999). See a more detailed caption in Figure 2 in the main text.}
        \label{fig:2DSup}
\end{figure}

\subsection{More Results for Stochastic Dynamic Inference}

In Figure~\ref{fig:DynamicInfMore} we show the results of $D_{mc}$ and $D_{sr}$ with the Optimistic Adam optimizer on the task of stochastic dynamic inference, using the same neural networks as in the main text. The results are similar to those with the Adam optimizer, with a slight improvement.

\begin{figure}[H]
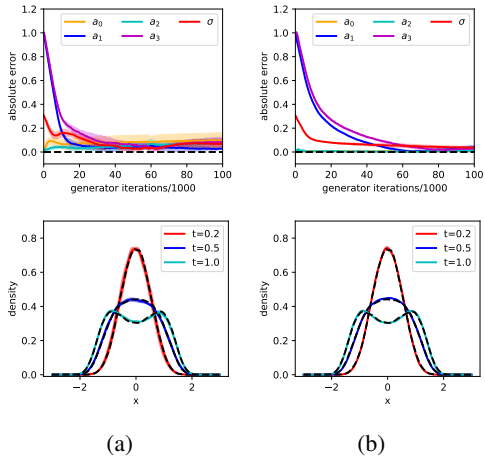

     \centering
    \begin{subfigure}[b]{0.19\textwidth}
    \includegraphics[width=\textwidth]{Plots/Inverse-PX-adamirror.pdf}
    \end{subfigure}
    \begin{subfigure}[b]{0.19\textwidth}
    \includegraphics[width=\textwidth]{Plots/Inverse-PQX12-adamirror.pdf}
    \end{subfigure}
    \begin{subfigure}[b]{0.19\textwidth}
    \includegraphics[width=\textwidth]{Plots/den-PX-adamirror.pdf}
    \caption{}
    \end{subfigure}
    \begin{subfigure}[b]{0.19\textwidth}
    \includegraphics[width=\textwidth]{Plots/den-PQX12-adamirror.pdf}
    \caption{}
    \end{subfigure}
    \caption{Comparison between different set-ups in the task of stochastic dynamic inference. (a): WGAN-GP, $D_{mc}$, Optimistic Adam optimizer, (b): WGAN-GP, $D_{sr}$, Optimistic Adam optimizer. See a more detailed caption in Figure 4 in the main text.
    }
    \label{fig:DynamicInfMore}
\end{figure}

In addition, in Figure~\ref{fig:DynamicInfSmall} we show the results using smaller neural networks for the vanilla discriminator and $D_{mc}$ (the number of hidden layers for the vanilla discriminator and $h$ in $D_{mc}$ is reduced by 1). For the vanilla discriminator, the Optimistic Adam optimizer manages to remove the high-frequency oscillation, compared with the Adam optimizer, but the inferred parameters are still incorrect. In contrast, both optimizers give good inference with $D_{mc}$ discriminator, and the Optimistic Adam optimizer performs better in that the inference converges faster.
\begin{figure}[ht]
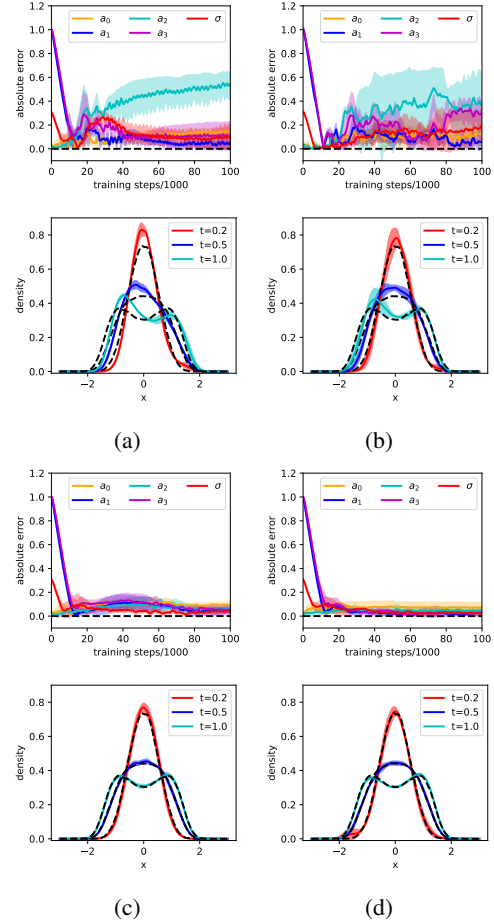

     \centering
    \begin{subfigure}[b]{0.19\textwidth}
    \includegraphics[width=\textwidth]{Plots/Inverse-small-Vanilla-adam.pdf}
    \end{subfigure}
    \begin{subfigure}[b]{0.19\textwidth}
    \includegraphics[width=\textwidth]{Plots/Inverse-small-Vanilla-adamirror.pdf}
    \end{subfigure}
    \begin{subfigure}[b]{0.19\textwidth}
    \includegraphics[width=\textwidth]{Plots/den-small-Vanilla-adam.pdf}
    \caption{}
    \end{subfigure}
    \begin{subfigure}[b]{0.19\textwidth}
    \includegraphics[width=\textwidth]{Plots/den-small-Vanilla-adamirror.pdf}
    \caption{}
    \end{subfigure}
        \begin{subfigure}[b]{0.19\textwidth}
    \includegraphics[width=\textwidth]{Plots/Inverse-small-PX-adam.pdf}
    \end{subfigure}
    \begin{subfigure}[b]{0.19\textwidth}
    \includegraphics[width=\textwidth]{Plots/Inverse-small-PX-adamirror.pdf}
    \end{subfigure}
    \begin{subfigure}[b]{0.19\textwidth}
    \includegraphics[width=\textwidth]{Plots/den-small-PX-adam.pdf}
    \caption{}
    \end{subfigure}
    \begin{subfigure}[b]{0.19\textwidth}
    \includegraphics[width=\textwidth]{Plots/den-small-PX-adamirror.pdf}
    \caption{}
    \end{subfigure}
    \caption{Comparison between different set-ups in the task of stochastic dynamic inference, using smaller discriminator neural networks. (a): WGAN-GP, vanilla discriminator, Adam optimizer, (b): WGAN-GP, vanilla discriminator, Optimistic Adam optimizer, (c): WGAN-GP, $D_{mc}$, Adam optimizer, (d): WGAN-GP, $D_{mc}$, Optimistic Adam optimizer. See a more detailed caption in Figure 4 in the main text.
    }
    \label{fig:DynamicInfSmall}
\end{figure}

\subsection{More Results for the Statistical Distance Surrogate}
As a supplement of Figure 5 in the main text, in Figure~\ref{fig:KLscatter1},~\ref{fig:KLscatter2} we show the scatter plots of the inference of KL divergence against the ground truth for dimensionality $d=2$. In Figure~\ref{fig:KLscatter3},~\ref{fig:KLscatter4} we also show the results for the 3D case, with a larger $D_{sr}$ neural network (128 as the hidden layer width and 32 as the output dimension of $f_1$, $f_2$ and $g$).

\begin{figure}[H]
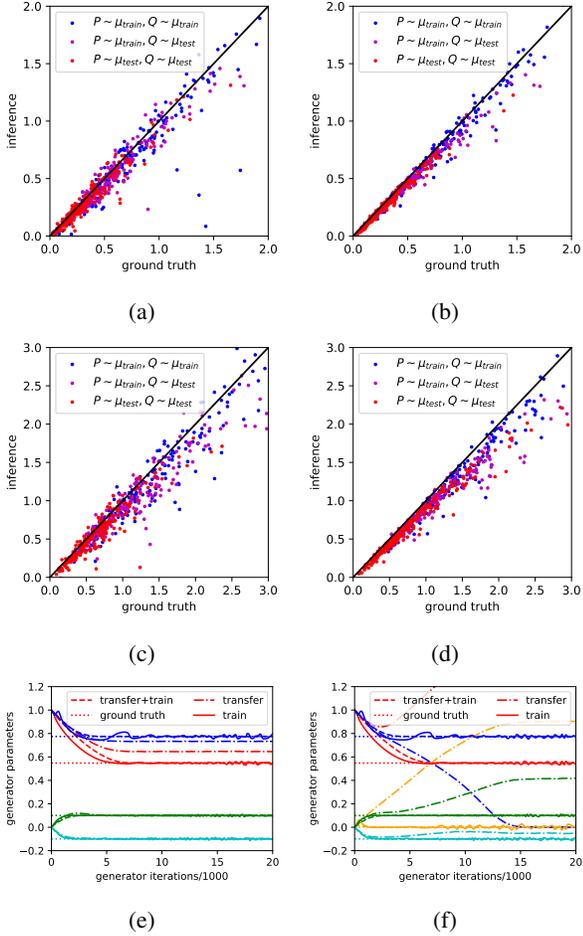

     \centering
    \begin{subfigure}[b]{0.23\textwidth}
    \includegraphics[width=\textwidth]{Plots/dim2-ms1000-Small.pdf}
    \caption{}
    \label{fig:KLscatter1}
    \end{subfigure}
    \begin{subfigure}[b]{0.23\textwidth}
    \includegraphics[width=\textwidth]{Plots/dim2-ms10000-Small.pdf}
    \caption{}
    \label{fig:KLscatter2}
    \end{subfigure}
    
    \begin{subfigure}[b]{0.23\textwidth}
    \includegraphics[width=\textwidth]{Plots/dim3-ms1000-FNN.pdf}
    \caption{}
    \label{fig:KLscatter3}
    \end{subfigure}
    \begin{subfigure}[b]{0.23\textwidth}
    \includegraphics[width=\textwidth]{Plots/dim3-ms10000-FNN.pdf}
    \caption{}
    \label{fig:KLscatter4}
    \end{subfigure}

    \begin{subfigure}[b]{0.23\textwidth}
    \includegraphics[width=\textwidth]{Plots/small-IsoTrans.pdf}
    \caption{}
    \label{fig:KLscatter5}
    \end{subfigure}
    \begin{subfigure}[b]{0.23\textwidth}
    \includegraphics[width=\textwidth]{Plots/small-Affine.pdf}
    \caption{}
    \label{fig:KLscatter6}
    \end{subfigure}
    
    \caption{Results for the KL divergence surrogate model. (a-d): Inference against the ground truth. (a): 2D, 1000 samples, (b): 2D, 10000 samples, (c): 3D, 1000 samples, (d): 3D, 10000 samples. (e-f) Generator parameters during the GAN training with different generator and discriminator set-ups. (e): The first generator set-up, with 4 degrees of freedom. (f): The second generator set-up, with 5 degrees of freedom. Different colors represent different generator parameters, while different line styles represent the results from different discriminator setups and the ground truth.}
    \label{fig:KLscatter}
\end{figure}

As a proof of concept, we then employ the 2D surrogate model as a discriminator in GAN. The target distribution is set as $\cN(m, \Sigma)$ with $m = [0.1,-0.1]$ and $\Sigma = \text{diag}([0.3, 0.6])$, which is a sample from $\mu_{test}$. The generator is defined as $G(z) = Az + b$, and we test with two generator set-ups: (1) $A = \text{diag}([a_1, a_2]), b = [b_1, b_2]$ with 4 degrees of freedom, and (2) $A = [[a_1, a_2],[0, a_3]], b = [b_1,b_2]$  with 5 degrees of freedom, both having ground truth for the parameters. We compare the following three set-ups of the discriminator: (a) $D_{sr}$ transferred from the well-trained surrogate model and is further trained in GAN, (b) transferred $D_{sr}$ without further training in GAN, (c) random initialized $D_{sr}$ with training in GAN. The generator parameters during the GAN training are visualized in Figure~\ref{fig:KLscatter5},~\ref{fig:KLscatter6}. One can see that for discriminator set-up (b), the generator parameters are not too bad in the first generator set-up with 4 degrees of freedom, but totally failed in the second generator setup. A possible explanation is that $G_{\#}\cN$ becomes an outlier of $\mu_{train}$ during the training and thus $D_{sr}$ cannot provide correct statistical distances. Discriminator set-ups (a) and (c) worked well in both generator setups, but note that the set-up (a), i.e. the one with transfer learning, converges faster and does not have the burrs on the curve. This demonstrates the benefit of the transfer learning with the pretrained $D_{sr}$.

\subsection{Surrogate Model for Optimal Transport Map}
In \citet{seguy2018large} the authors proposed a two-step method for learning the barycentric projections of regularized optimal transport, as approximations of optimal transport maps between continuous measures. Their method solves the map between one pair of measures in one training process, but we can make a modification with measure-conditional discriminators and obtain a surrogate model for the optimal transport maps between various pairs of measures.

\begin{figure}[ht]
     \centering
    \begin{subfigure}[b]{0.45\textwidth}
    \includegraphics[width=\textwidth]{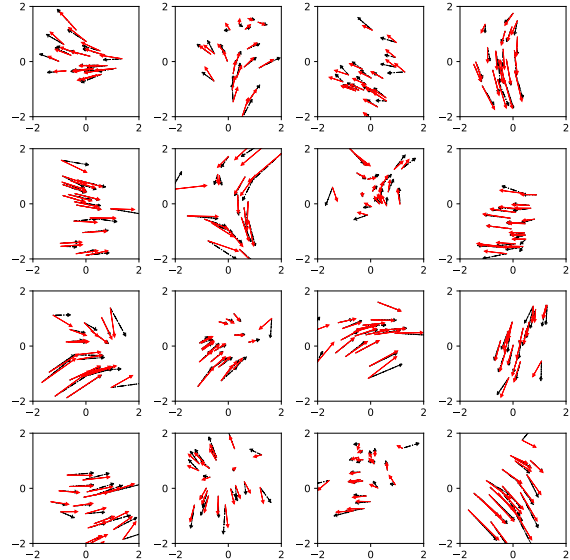}
    \end{subfigure}
    \caption{Results of the surrogate model for optimal transport maps between 16 pairs of Gaussian distributions. The red arrows represent the barycentric projection maps given by the surrogate model $D_{sr,G}$, while the black arrows represent the reference optimal transport maps from linear programming.}
    \label{fig:RegOT}
\end{figure}

Specifically, we use two $D_{sr}$ neural networks, denoted as $D_{sr,G}$ and $D_{sr,D}$ to approximate the transport map and an auxiliary function, respectively. The first step in \citet{seguy2018large} is to maximize
\begin{equation}
    \E_{x\sim P, y\sim Q} [u(x)+v(y)-\frac{1}{4\epsilon}(u(x)+v(y)-c(x,y))_{+}^2],
\end{equation}
which is the variational form of the optimal transport cost with $L^2$ regularization, where $u$ and $v$ are two neural network to train, $\epsilon = 0.02$ is the regularization weight, and $c(x,y)$ is set as $||x-y||^2$. Utilizing the symmetry between the optimal $u$ and $v$ if we swap $P$ and $Q$, we use $D_{sr,D}(P, Q, x)$ and $D_{sr,D}(Q,P,y)$ to replace $u(x)$ and $v(y)$, respectively. The loss function for $D_{sr,D}$ writes as
\begin{equation}\label{eqn:reg_LD}
\begin{aligned}
    L_D = &\E_{(P,Q)\sim\mu}\E_{x\sim P, y\sim Q}\\ &[-D_{sr,D}(P, Q, x)-D_{sr,D}(Q,P,y)\\+&\frac{1}{4\epsilon}(D_{sr,D}(P, Q, x)+D_{sr,D}(Q,P,y)-c(x,y))_{+}^2].
\end{aligned}
\end{equation}
The second step in \citet{seguy2018large} is to train $f$ to minimize
\begin{equation}
    \E_{x\sim P, y\sim Q} [\frac{1}{2\epsilon}c(y, f(x))(u(x)+v(y)-c(x,y))_{+}],
\end{equation}
so that the minimizer $f^*$ is the barycentric projection of the regularized optimal transport, which can be viewed as an approximation of the optimal transport map from $P$ to $Q$. We will use $D_{sr,G}(P,Q,x)$ to replace $f(x)$, and the loss function for $D_{sr,G}$ writes as
\begin{equation}\label{eqn:reg_LG}
\begin{aligned}
    L_G =& \E_{(P,Q)\sim\mu}\E_{x\sim P, y\sim Q} [\frac{1}{2\epsilon}c(y, D_{sr,G}(P,Q,x))\\&(D_{sr,D}(P, Q, x)+D_{sr,D}(Q,P,y)-c(x,y))_{+}],
\end{aligned}
\end{equation}
Note that we take the expectation over $(P,Q)$ in Equation~\ref{eqn:reg_LD} and \ref{eqn:reg_LG}, so that in the end of training, $D_{sr,G}(P,Q,x)$ will approximate the optimal transport map from $P$ to $Q$ for various $(P,Q)$ pairs. 

\citet{seguy2018large} propose to train $u$ and $v$ until convergence, and then train $f$. But we found that training $D_{sr,D}$ and $D_{sr,G}$ iteratively after a warming-up training of $D_{sr,D}$ also works. We train and test with $P,Q$ independently sampled from the 2D $\mu_{train}$ in Section 6.3 of the main text, and show the results after 200,000 iterations with 10,000 warming-up steps in Figure~\ref{fig:RegOT}. The reference map is the empirical optimal transport map between 1000 samples, calculated by the POT package \cite{flamary2017pot} using linear programming. One can see that the surrogate model provides a similar transport map as the reference.

\end{document}